\newtheorem{theorem}{Theorem}
\newtheorem{lemma}{Lemma}
\newtheorem{assumption}{Assumption}
\title{\LARGE \bf
Modified Meta-Thompson Sampling for Linear Bandits and  Its Bayes Regret Analysis}
\author{Hao Li, Dong Liang$^*$ and Zheng Xie
\thanks{$*$ Corresponding author}
\thanks{Hao Li, Dong Liang and Zheng Xie are with College of Sciences, National University of Defense Technology, Changsha, Hunan, China, 410073
        {\tt\small lihaomath@163.com(Hao Li), dongliangnudt@nudt.edu.cn(Dong Liang), xiezheng81@nudt.edu.cn(Zheng Xie)}}%
}
\begin{document}

\maketitle
\thispagestyle{empty}
\pagestyle{empty}

\begin{abstract}
Meta-learning is characterized by its ability to learn how to learn, enabling the adaptation of learning strategies across different tasks.
Recent research introduced the Meta-Thompson Sampling (Meta-TS), which meta-learns an unknown prior distribution   sampled from a meta-prior  by interacting with bandit instances drawn from it. However, its analysis was limited to Gaussian bandit. The contextual multi-armed bandit  framework   is an extension of the Gaussian Bandit, which challenges agent to utilize context vectors to predict the most valuable arms, optimally balancing exploration and exploitation to minimize regret over time. This paper introduces Meta-TSLB algorithm, a modified Meta-TS for linear contextual bandits. We theoretically analyze Meta-TSLB and derive an $ O\left( \left( m+\log \left( m \right) \right) \sqrt{n\log \left( n \right)} \right)$  bound on its Bayes regret, in which  $m$ represents the number of bandit instances, and $n$  the number of rounds of Thompson Sampling. Additionally, our work complements the analysis of Meta-TS for linear contextual bandits. The performance of Meta-TSLB is evaluated experimentally under different settings, and we experimente and analyze the  generalization capability of Meta-TSLB, showcasing its potential to adapt to unseen instances. 

\end{abstract}

\section{Introduction}

The multi-armed bandit  framework encapsulates the fundamental exploration-exploitation dilemma prevalent in sequential decision-making scenarios. Among its various versions, the contextual multi-armed bandit problem stands out. In this setting, an agent confronts $n$ rounds, each presenting a choice from $k$ distinct actions, or arms. Prior to selecting an arm, the agent is privy to  $k$ context vectors, also designated as feature vectors, associated with $k$ arms.
Leveraging these context vectors in conjunction with historical reward from previously pulled arms, the agent decides which arm to pull in the current round. 
The objective is to progressively unravel the intricate relationship between context vectors and rewards, thereby enabling precise predictions of the most rewarding arm based solely on contextual information.


In the realm of contextual bandits with linear payoff functions, the agent engages in a competition with the class of all  linear  predictors on the context vectors. We focus on a stochastic contextual bandit problem under the assumption of linear realizability, postulating the existence of an unknown parameter $\boldsymbol{\mu } \in \mathbb{R}^d$
such that the expected reward for arm $i$ given context  $\boldsymbol{b}_i$
is $\boldsymbol{b}_i^T\boldsymbol{\mu }$.
Under this realizability assumption, the optimal predictor is inherently the linear predictor associated with $\boldsymbol{\mu }$, and the agent's goal is to learn this underlying parameter. In this work, we concisely refer to this problem as Linear Bandits. This realizability assumption aligns with established practices in the literature on contextual multi-armed bandits, as exemplified in \cite{02, L03, CB, L04}.

In 2021, Kveton et al. \cite{metaTS} delved into a general framework where the agent can prescribe an uncertain prior, with the true prior learned through sequential interactions with bandit instances.  Specifically,  a learning agent sequentially interacts with $m$ bandit instances, each represented by a parameter $\boldsymbol{\mu }$. Each such interaction, spanning $n$ rounds, is termed a \textit{task}. These instances share a commonality: their parameters are drawn from an unknown instance prior $P_*$, which is sampled from a known \textit{meta-prior} $Q$.
The agent's objective is to minimize regret in each sampled instance, performing almost as if $P_*$ were known. This is accomplished by adapting to $P_*$ through the instances interactions, embodying a form of meta-learning \cite{ML1,ML2,ML3,ML4}.
To tackle this challenge, Kveton et al. \cite{metaTS} introduced Meta-TS, a meta-Thompson sampling algorithm. This algorithm is designed to bound the Bayes regret in Gaussian bandits, demonstrating its efficacy in leveraging meta-learning to optimize performance across bandit instances. Subsequently, Azizi et al. \cite{azizi} devised a meta-learning framework aimed at minimizing simple regret in bandit problems. A key application of meta-prior exploration in recommender systems is assessing users' latent interests for items like movies. Each user is a bandit instance, with items as arms. While a standard prior could aid Thompson sampling, the true form may be uncertain. This work formalizes learning a prior over user interests to explore preferences of new ("cold-start") users.


In this paper, we improve  Meta-TS  proposed by Kveton et al. \cite{metaTS} to address the linear bandit problem, naming it Meta-TSLB. We provide a theoretical analysis of Meta-TSLB, deriving an upper bound on its Bayes regret. Moreover, as the   Bayes regret analysis of Meta-TS \cite{metaTS} was confined to Gaussian bandits, we also establish an upper bound for Meta-TS when applied to linear bandits.
Furthermore, we explore various formulations of linear bandits and evaluate their performance empirically through experiments. Finally, we demonstrate and analyze the generalization ability of the algorithm through experiments.

\section{Problem Statement}

Throughout this paper, we use $[ n]$ to denote the set  $\{1,\cdots,n\}$, $\left\|\cdot \right\| $ to represent the $l_2$-norm  and  $\mathbf{1}\left\{ E   \right\}$ to denote that event $E$ occurs.  
A linear bandit instance  with $k$ arms  is parameterized by an unknown $\boldsymbol{\mu } \in \mathbb{R}^d$ to the agent. The reward $r_i\left( t \right)$  of arm $i$ in round  $t$ in instance $\boldsymbol{\mu }$ is drawn i.i.d. from a Gaussian distribution $\mathcal{N}( \boldsymbol{b }_i\left( t \right) ^T\boldsymbol{\mu },v^2)$, where $ \boldsymbol{b }_i\left( t \right) ^T\boldsymbol{\mu }$ is the mean, $v^2$ is the variance and 
$\boldsymbol{b }_i\left( t \right) $ is a context vector of arm $i$  in round $t$.
Suppose that  $\left\| \boldsymbol{b}_i\left( t \right) \right\| \leqslant 1$.

The agent engages in a sequential interaction with $m$ bandit instances, each uniquely identified by an index $s\in [ m]$. We designate each of these interactions as a distinct task. While acknowledging that, in practice, the context vectors of individual instances may vary, we adopt a simplifying assumption that all instances share a common set of context vectors. This assumption is purely for notational convenience and does not alter the subsequent analytical process or conclusions drawn from it.

The problem in a Bayesian  fashion is formalized as below.  Assume the availability of a prior distribution $Q=\mathcal{N}(\boldsymbol{\mu }_Q,v^2 \varSigma _Q )$, and the instance prior is set as $P_*=\mathcal{N}(\boldsymbol{\mu }_*,v^2 \varSigma _* )$ where $\boldsymbol{\mu }_*\sim Q$. In this work, we do not require that $\varSigma _Q, \varSigma _*$ are diagonal matrices. We refer to $Q$ as a meta-prior since it is a prior over priors. The agent knows $Q$ and  covariance matrix   $v^2 \varSigma _*$ in $P_*$  but not $\boldsymbol{\mu }_*$. At the beginning of task $s\in[m]$, an instance $\boldsymbol{\mu }_{s}$ is sampled i.i.d. from  $P_*$.  The agent interacts with  $\boldsymbol{\mu }_{s}$  for $n$ rounds to learn $\boldsymbol{\mu }_*$.

Denote by $A_{s,t}\in[k]$ the pulled arm in round $t$ of task $s$. The result of the interactions in a task $s$ to round $t$  is history
$$
    H_{s,t}=\left\{ A_{s,\tau},r_{A_{s,\tau}}\left( \tau \right) ,\boldsymbol{b}_i\left( \tau \right) ,i=1,\cdots ,k,\tau =1,\cdots ,t \right\} .
$$
We denote  by $H_{1:s}=H_{1,n}\otimes \cdots \otimes H_{s,n}$ the histories of tasks $1$ to $s$. When the agent needs to choose the pulled arm in the   round  $t+1$ of task $s$, it can  observe $H_{1:s-1}$, $H_{s,t}$ and     the contexts $\boldsymbol{b}_i\left( t+1 \right),\ i\in[k] $.

The $n$-round Bayes regret of a learning agent or algorithm over $m$ tasks with instance prior $P_*$ is
$$
    R(m,n;P_*)=\sum_{s=1}^m{\mathbb{E}}\left[ \left.  \sum_{t=1}^n{\left( \boldsymbol{b}_{A^{*}_{s,t}}\left( t \right) -\boldsymbol{b}_{A_{_{s,t}}}\left( t \right) \right)}^T\boldsymbol{\mu }_s\right|\:P_* \right] ,
$$
where $A_{_{s,t}}^{*}=\mathrm{arg} \max_{i\in [k]} \,\,\boldsymbol{b}_i\left( t \right) ^T\boldsymbol{\mu }_s$ is the optimal arm in the bandit instance $\boldsymbol{\mu }_s$ in round $t$ of  task $s$. The above expectation is over bandit instances $\boldsymbol{\mu }_s$ sampled from $P_*$, their realized rewards, and pulled arms.


\section{ Modified Meta-Thompson Sampling for Linear Bandits}

Thompson Sampling (TS) \cite{TS1,TS2,TS3,TS4} stands as the premier and widely adopted bandit algorithm, parameterized by   a  specified prior. Kveton et al. \cite{metaTS} studied a more general setting  where the agent confronts uncertainty over an unknown prior $P_*$  and introduced Meta-Thompson Sampling (Meta-TS), a novel method using sequential interactions with randomly drawn bandit instances from $P_*$ to meta-learn the unknown $P_*$.


Agrawal et al. \cite{TSCB} generalized Thompson Sampling   for linear bandits by employing a Gaussian likelihood function and a standard multivariate Gaussian prior. We extend this algorithm by allowing any multivariate Gaussian distribution as the prior (Algorithm \ref{alg2}). When TS is applied to a bandit instance $\boldsymbol{\mu }$, the agent samples
$\tilde{\boldsymbol{\mu }}(t)$ from the posterior distribution $P(t)$ in round $t$ and selects the arm $i$ that maximizes $b_i(t)^T\tilde{\boldsymbol{\boldsymbol{\mu } }}(t)$. Subsequently, the agent receives a random reward $r_{A_t}\left( t \right) \sim \mathcal{N} \left( \boldsymbol{b }_{A_t}\left( t \right) ^T\boldsymbol{\mu } ,v^2 \right)$  and updates the posterior distribution to $P(t+1)$.

\begin{algorithm}[h]
    {\bf {Setting:}}  Bandit instance $\boldsymbol{\mu }$; 

    {\bf {Input:}}  Prior  $P(1)=\mathcal{N} \left( \hat{\boldsymbol{\mu }}\left( 1 \right) ,v^2B\left( 1 \right) ^{-1} \right)$

    {\bf {For}} $t=1,2,\cdots,n$ {\bf {do}}

    \qquad Sample  $\tilde{\boldsymbol{\mu }}(t) \sim  P(t)= \mathcal{N} \left( \hat{\boldsymbol{\mu }}\left( t \right) ,v^2B\left( t \right) ^{-1} \right)$

    \qquad Pull arm $
        A_t=\mathrm{arg} \underset{i}{\max}\,\,\boldsymbol{b }_i\left( t \right) ^T\tilde{\boldsymbol{\mu }}\left( t \right) $ and observe $r_{A_t}\left( t \right) \sim \mathcal{N} \left( \boldsymbol{b }_{A_t}\left( t \right) ^T\boldsymbol{\mu } ,v^2 \right)
    $

    \qquad Update $P(t)$ to $P(t+1)$

    {\bf {End For}}
    \caption{Thompson Sampling  for Linear Bandits (TS).}
    \label{alg2}
\end{algorithm}

The posterior distribution $P(t)= \mathcal{N} \left( \hat{\boldsymbol{\mu }}\left( t \right) ,v^2B\left( t \right) ^{-1} \right)$   is obtained by
   $ B\left( t \right) =B\left( 1 \right) +\sum_{\tau =1}^{t-1}{\boldsymbol{b}_{A_{\tau}}\left( \tau \right) \boldsymbol{b}_{A_{\tau}}\left( \tau \right) ^T}$ and $
    \hat{\boldsymbol{\mu}}\left( t \right) =B\left( t \right) ^{-1}\left[ B\left( 1 \right) \hat{\boldsymbol{\mu}}\left( 1 \right) +\sum_{\tau =1}^{t-1}{\boldsymbol{b}_{A_{\tau}}\left( \tau \right) r_{A_{\tau}}\left( \tau \right)} \right] .$

Meta-TSLB is a derivative of  Meta-TS  and also  formulates uncertainty over  $P_*$. This uncertainty is encapsulated within a meta-posterior, a probabilistic framework that spans potential instance priors.
We denote the meta-posterior in task $s$ by $Q_s=\mathcal{N}(\boldsymbol{\mu }_{Q,s},v^2 \varSigma _{Q,s} ) $, in which $\boldsymbol{\mu }_{Q,s}\in\mathbb{R}^d$ is an estimate of $\boldsymbol{\mu }_*$.
For task $s$, Meta-TSLB applies TS with prior $P_s=\mathcal{N}({\boldsymbol{\mu }}_{Q,s},v^2 \varSigma _* )$  to bandit instance $\boldsymbol{\mu }_{s}$ for $n$ rounds.
After that, the meta-posterior is updated by Lemma \ref{Qs}. The pseudocode of  Meta-TSLB    is presented in Algorithm \ref{alg1}. The difference between Meta-TSLB and  Meta-TS \cite{metaTS} lies in that Meta-TS samples $\hat{\boldsymbol{\mu}}_s$ from $Q_s$ and uses $P_s=\mathcal{N}(\hat{\boldsymbol{\mu}}_s,v^2 \varSigma _* )$ as the prior for the   task $s$.



\begin{algorithm}[h]
    {\bf {Setting:}} Instance prior   $P_*=\mathcal{N}(\boldsymbol{\mu }_*,v^2 \varSigma _* )$

    {\bf {Input:}} Meta-prior $Q=\mathcal{N}(\boldsymbol{\mu }_Q,v^2 \varSigma _Q ) $

    Compute $Q_1 \gets Q$

    {\bf {For}} $s=1,2,\cdots,m$ {\bf {do}}

    \qquad Sample bandit instance $\boldsymbol{\mu }_s \sim P_* $


    \qquad Apply TS (Algorithm \ref{alg2}) with prior  $P_s=\mathcal{N}({\boldsymbol{\mu }}_{Q,s},v^2 \varSigma _* )$  to $\boldsymbol{\mu }_s$ for $n$ rounds

    \qquad Update meta-posterior $Q_{s+1}  $ by Lemma \ref{Qs}

    {\bf {End For}}

    \caption{Modified Meta-TS for  Linear Bandits (Meta-TSLB).}
    \label{alg1}
\end{algorithm}


\begin{lemma}\label{Qs}
    In task $s$, the pulled arm in round $t$ of TS is $A_t$ and $I_d$ is $d$-dimensional unit matrix. 
    Then the meta-posterior in task $s+1$ is $Q_{s+1}=\mathcal{N}(\boldsymbol{\mu }_{Q,s+1},v^2 \varSigma _{Q,s+1} )$, where
    $
        \varSigma _{Q,s+1}=\left[ \varSigma _{Q,s}^{-1}+\varSigma _{*}^{-1}-\left( \varSigma _*\left( \sum_{t=1}^n{\boldsymbol{b }_{A_t}\left( t \right) \boldsymbol{b }_{A_t}\left( t \right) ^T} \right) \varSigma _*+\varSigma _* \right) ^{-1} \right] ^{-1},
    $ and 
    $
        \boldsymbol{\mu } _{Q,s+1}=\varSigma _{Q,s+1}\left[ {\boldsymbol{\mu } _{Q,s}}^T\varSigma _{Q,s}^{-1}+\left( \sum_{t=1}^n{r_{A_t}\left( t \right) \boldsymbol{b }_{A_t}\left( t \right)} \right) ^T\right.$
        $\left.\left( \varSigma _*\sum_{t=1}^n{\boldsymbol{b }_{A_t}\left( t \right) \boldsymbol{b }_{A_t}\left( t \right) ^T}+I_d \right) ^{-1} \right] ^T.
    $
\end{lemma}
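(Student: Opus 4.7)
The plan is to derive the meta-posterior $Q_{s+1}$ by a direct conjugate Bayesian update, treating $\boldsymbol{\mu}_*$ as the unknown parameter and the data of task $s$ as the evidence. Collect the design of task $s$ into the matrix $B_s=[\boldsymbol{b}_{A_1}(1),\ldots,\boldsymbol{b}_{A_n}(n)]^T \in \mathbb{R}^{n\times d}$ and the reward vector $\boldsymbol{r}_s=(r_{A_1}(1),\ldots,r_{A_n}(n))^T$. Bayesian filtering tells us that $Q_s$ already summarizes the information carried by $H_{1:s-1}$, so it suffices to update $Q_s$ using only $(B_s,\boldsymbol{r}_s)$. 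The choice of $A_t$ in each round is a measurable function of previously observed quantities, so it contributes no extra $\boldsymbol{\mu}_*$-dependent factor to the likelihood and can be ignored in the update.

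The key step is to integrate out $\boldsymbol{\mu}_s$ analytically. Given $\boldsymbol{\mu}_*$, we have $\boldsymbol{\mu}_s\sim\mathcal{N}(\boldsymbol{\mu}_*,v^2\Sigma_*)$ and $\boldsymbol{r}_s = B_s\boldsymbol{\mu}_s+v\boldsymbol{\epsilon}$ with $\boldsymbol{\epsilon}\sim\mathcal{N}(0,I_n)$ independent of $\boldsymbol{\mu}_s$. Marginalizing $\boldsymbol{\mu}_s$ therefore yields
$$\boldsymbol{r}_s\mid\boldsymbol{\mu}_*\;\sim\;\mathcal{N}\bigl(B_s\boldsymbol{\mu}_*,\;v^2 M\bigr),\qquad M:=B_s\Sigma_*B_s^T+I_n.$$
Combining this Gaussian likelihood with the Gaussian prior $\boldsymbol{\mu}_*\sim\mathcal{N}(\boldsymbol{\mu}_{Q,s},v^2\Sigma_{Q,s})$, the standard conjugate update produces $Q_{s+1}=\mathcal{N}(\boldsymbol{\mu}_{Q,s+1},v^2\Sigma_{Q,s+1})$ with
$$\Sigma_{Q,s+1}^{-1}=\Sigma_{Q,s}^{-1}+B_s^T M^{-1}B_s,\qquad \boldsymbol{\mu}_{Q,s+1}=\Sigma_{Q,s+1}\bigl[\Sigma_{Q,s}^{-1}\boldsymbol{\mu}_{Q,s}+B_s^T M^{-1}\boldsymbol{r}_s\bigr].$$

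What remains is purely algebraic: rewriting $B_s^T M^{-1}B_s$ and $B_s^T M^{-1}\boldsymbol{r}_s$ in the particular form stated in the lemma. Setting $G:=B_s^T B_s=\sum_{t=1}^{n}\boldsymbol{b}_{A_t}(t)\boldsymbol{b}_{A_t}(t)^T$, the Woodbury identity gives $M^{-1}=I_n-B_s(\Sigma_*^{-1}+G)^{-1}B_s^T$, so that $B_s^T M^{-1}B_s=G-G(\Sigma_*^{-1}+G)^{-1}G=\Sigma_*^{-1}(\Sigma_*^{-1}+G)^{-1}G$. Using the push-through identity $(\Sigma_*^{-1}+G)^{-1}=\Sigma_*(I_d+G\Sigma_*)^{-1}$ and the algebraic identity $\Sigma_*^{-1}-(G\Sigma_*+I_d)^{-1}\Sigma_*^{-1}=(G\Sigma_*+I_d)^{-1}G$, one verifies that $B_s^T M^{-1}B_s=\Sigma_*^{-1}-(\Sigma_*G\Sigma_*+\Sigma_*)^{-1}$, matching the claimed expression for $\Sigma_{Q,s+1}^{-1}$. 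An analogous manipulation yields $B_s^T M^{-1}\boldsymbol{r}_s=(G\Sigma_*+I_d)^{-1}B_s^T\boldsymbol{r}_s$, and taking transposes (using symmetry of $\Sigma_{Q,s}^{-1}$ together with $(\Sigma_*G+I_d)^{T}=G\Sigma_*+I_d$) recovers the stated formula for $\boldsymbol{\mu}_{Q,s+1}$. The Bayesian derivation itself is routine; the main technical obstacle is this Woodbury/push-through bookkeeping, which must be executed precisely in order to reach the exact form appearing in the statement.
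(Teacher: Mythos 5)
Your proposal is correct and reaches the stated formulas; at its core it is the same argument as the paper's — integrate out the instance parameter $\boldsymbol{\mu}_s$ to obtain a Gaussian likelihood for $\boldsymbol{\mu}_*$ from task $s$, then perform a conjugate Gaussian update of $Q_s$. The difference is in how the algebra is organized. The paper never leaves $\mathbb{R}^d$: it writes $f(\bar{\boldsymbol{\mu}})$ as an explicit integral over $\boldsymbol{\mu}$, completes the square inside the integral (introducing $G=\sum_t \boldsymbol{b}_{A_t}(t)\boldsymbol{b}_{A_t}(t)^T+\varSigma_*^{-1}$ and $\boldsymbol{\xi}$), and then completes the square again in $\bar{\boldsymbol{\mu}}$ after multiplying by $Q_s$, so the final precision $\varSigma_{Q,s}^{-1}+\varSigma_*^{-1}-\varSigma_*^{-1}G^{-1}\varSigma_*^{-1}$ falls out directly and only the two small identities $G^{-1}\varSigma_*^{-1}=(\varSigma_*\sum_t\boldsymbol{b}\boldsymbol{b}^T+I_d)^{-1}$ and $\varSigma_*^{-1}G^{-1}\varSigma_*^{-1}=(\varSigma_*(\sum_t\boldsymbol{b}\boldsymbol{b}^T)\varSigma_*+\varSigma_*)^{-1}$ are needed at the end. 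You instead pass through the $n$-dimensional observation space, identify the marginal covariance $v^2(B_s\varSigma_*B_s^T+I_n)$, invoke the standard conjugate-update moment formulas, and then use Woodbury and push-through to convert $B_s^TM^{-1}B_s$ and $B_s^TM^{-1}\boldsymbol{r}_s$ back to the stated $d\times d$ form; your identities check out (e.g.\ $B_s^TM^{-1}B_s=(G\varSigma_*+I_d)^{-1}G=\varSigma_*^{-1}-(\varSigma_*G\varSigma_*+\varSigma_*)^{-1}$ with your $G=B_s^TB_s$, and the final transpose correctly converts $(G\varSigma_*+I_d)^{-1}$ into the paper's $(\varSigma_*G+I_d)^{-1}$). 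Your version makes the probabilistic structure (a Gaussian likelihood with noise inflated by $B_s\varSigma_*B_s^T$) more transparent and is arguably more standard, at the cost of an extra layer of Woodbury bookkeeping; the paper's version avoids ever forming the $n\times n$ matrix. Your remark that the adaptive choice of $A_t$ contributes no $\boldsymbol{\mu}_*$-dependent factor is a point the paper leaves implicit, and is worth stating.
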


\section{Regret Bound of Meta-TSLB}

In this section, we first conduct an analysis of the fundamental properties of TS. Subsequently, leveraging these properties, we present the most important theorem of this paper (Theorem \ref{theo_mainG}), which establishes the Bayes regret bound for Meta-TSLB. In addition, we supplement the Bayes regret bound of meta-TS applied to linear bandits (Theorem \ref{theo_main}). For a detailed and comprehensive proof of the process, kindly refer to Appendix.


In this paper,  for matrix $A$, we use $\lambda _{\max}(A)$ and $\lambda _{\min}(A)$ to represent the maximum and minimum eigenvalues of $A$, respectively. In particular, $\lambda _{\max}$ and $\lambda _{\min}$ specifically refer to the maximum and minimum eigenvalues of $\varSigma _*^{-1} $ in $P_*$.

\subsection{Key Lemmas}
Firstly, we present a assumption and delve into its significance, followed by a discussion on how to select the constant $\vartheta $. 
Using  $\vartheta $, we can greatly reduce the regret bound given in next subsection.

\begin{assumption}\label{con1}
 The inequality $\lambda _{\min}\left( B\left( t \right) \right)\geqslant 4\vartheta (t-1)$ holds  for  any  $A_{t}\in \left[ k \right]$, in which $\vartheta  $ is a positive constant.
\end{assumption}

$\vartheta  $ is a parameter related to the context vectors and $B(1)$. 
For symmetric positive definite matrices $A$ and $B$, we have
    \begin{equation}\label{eq18}
    \begin{aligned}
                        \lambda _{\min}\left( A+B \right) =\underset{\left\| \boldsymbol{x} \right\| \leqslant 1}{\min}\,\,\boldsymbol{x}^T\left( A+B \right) \boldsymbol{x} &\geqslant \underset{\left\| \boldsymbol{x} \right\| \leqslant 1}{\min}\,\,\boldsymbol{x}^TA\boldsymbol{x}+\underset{\left\| \boldsymbol{x} \right\| \leqslant 1}{\min}\,\,\boldsymbol{x}^TB\boldsymbol{x}\\&= \lambda _{\min}\left( A \right) + \lambda _{\min}\left( B \right) .
    \end{aligned}
    \end{equation}
Thus, $\lambda _{\min}\left( B\left( t \right)\right) $ is monotonically increasing with respect to $t$. Now, we present a way to choose $\vartheta  $. Assuming there exists a constant $\varDelta$  such that the matrix $
\sum_{\tau =t_0+1}^{t_0+\varDelta}{\boldsymbol{b}_{A_{\tau}}\left( \tau \right) \boldsymbol{b}_{A_{\tau}}\left( \tau \right) ^T}
$ is full rank  for  any $t_0\in \left[ n-\varDelta \right] $  and  $
\left\{ A_{\tau}\in \left[ k \right] \right\} _{\tau =t_0+1}^{t_0+\varDelta}
$.
Let $
\rho _{\min}=\underset{t_0\in \left[ n-\varDelta \right] ,\left\{ A_{\tau}\in \left[ k \right] \right\} _{\tau =t_0+1}^{t_0+\varDelta}}{\min}\,\, \lambda _{\min}\left( \sum_{\tau =t_0+1}^{t_0+\varDelta}{\boldsymbol{b}_{A_{\tau}}\left( \tau \right) \boldsymbol{b}_{A_{\tau}}\left( \tau \right) ^T} \right) .
$ 
Then we obtain that for $t-1 \leqslant \varDelta$, $
\frac{\lambda _{\min}\left( B\left( 1 \right) \right)}{4\left( t-1 \right)}\geqslant \frac{\lambda _{\min}\left( B\left( 1 \right) \right)}{4\varDelta}
$ and  for $t-1>\varDelta$,
$$
\begin{aligned}
    \lambda _{\min}\left( B\left( t \right) \right) & \geqslant \lambda _{\min}\left( B\left( 1 \right) \right) +\frac{t-1}{\varDelta}\rho _{\min}-\rho _{\min}
\\
&\geqslant \begin{cases}
	\frac{t-1}{\varDelta}\rho _{\min},   &if\,\,\lambda _{\min}\left( B\left( 1 \right) \right) \geqslant \rho _{\min},\\
	\frac{t-1}{\varDelta}\lambda _{\min}\left( B\left( 1 \right) \right) , &if\,\,\lambda _{\min}\left( B\left( 1 \right) \right) <\rho _{\min}.
\end{cases}
\end{aligned}
$$
Therefore, the parameter in Assumption \ref{con1} can be set as
$
\vartheta  =\frac{1}{4\varDelta}\min \left\{ \rho _{\min},\lambda _{\min}\left( B\left( 1 \right) \right) \right\} 
$.

\begin{lemma}\label{theo1}
    Let $\boldsymbol{\mu }$ be a instance generated as $\boldsymbol{\mu } \sim P_*=\mathcal{N} \left( \boldsymbol{\mu } _*,v^2\varSigma _* \right)$. Let $A_{t}^*  $ be the optimal arm in round $t$ under $\boldsymbol{\mu }$ and $A_t$ be the pulled arm in round $t$ by TS with prior $P_*$. $\vartheta  $ is the  constant    in Assumption  \ref{con1}.  Then for any $\delta>0$,
$
\mathbb{E} \left[ \sum_{t=1}^n{\left( \boldsymbol{b}_{A_{t}^*  }\left( t \right) -\boldsymbol{b}_{A_t}\left( t \right) \right) ^T\boldsymbol{\mu }} \right] \leqslant 2v\left( \sqrt{\frac{1}{\lambda _{\min}}}+\sqrt{\frac{n-1}{\vartheta }} \right) \sqrt{2\log \left( \frac{1}{\delta} \right)}    +nk\delta v\sqrt{\frac{2}{\pi \lambda _{\min}}}.
$
\end{lemma}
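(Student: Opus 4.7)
The plan is to adapt the Russo--Van Roy confidence-bound analysis of Thompson sampling to the Gaussian linear model. For each round $t$ I would introduce upper and lower confidence bounds $U_t(i)=\boldsymbol{b}_i(t)^T\hat{\boldsymbol{\mu}}(t)+c\,\|\boldsymbol{b}_i(t)\|_{B(t)^{-1}}$ and $L_t(i)=\boldsymbol{b}_i(t)^T\hat{\boldsymbol{\mu}}(t)-c\,\|\boldsymbol{b}_i(t)\|_{B(t)^{-1}}$ with $c=v\sqrt{2\log(1/\delta)}$, together with the good event $G_t=\bigcap_{i=1}^k\{\boldsymbol{b}_i(t)^T\boldsymbol{\mu}\in[L_t(i),U_t(i)]\}$. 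The key probabilistic observation is that, conditional on $H_{t-1}$, the true parameter $\boldsymbol{\mu}$ and the TS sample $\tilde{\boldsymbol{\mu}}(t)$ are i.i.d.\ draws from the posterior $\mathcal{N}(\hat{\boldsymbol{\mu}}(t),v^2B(t)^{-1})$; hence the oracle-optimal arm $A_t^*$ and the pulled arm $A_t$ are conditionally identically distributed, yielding the posterior-matching identity $\mathbb{E}[U_t(A_t^*)\mid H_{t-1}]=\mathbb{E}[U_t(A_t)\mid H_{t-1}]$, and similarly for $L_t$.

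Next I would split the per-round regret on $G_t$ and $G_t^c$. On $G_t$, the inequality $\boldsymbol{b}_{A_t^*}(t)^T\boldsymbol{\mu}-\boldsymbol{b}_{A_t}(t)^T\boldsymbol{\mu}\leq U_t(A_t^*)-L_t(A_t)$ holds pointwise; taking the conditional expectation and invoking posterior matching on both confidence sides gives $\mathbb{E}[U_t(A_t^*)-L_t(A_t)\mid H_{t-1}]=2c\,\mathbb{E}[\,\|\boldsymbol{b}_{A_t}(t)\|_{B(t)^{-1}}\mid H_{t-1}]$, which is the source of the factor $2$ in the width contribution. On $G_t^c$, I would union-bound over arms and apply the Gaussian positive-part estimate: since the posterior variance $v^2\|\boldsymbol{b}_i(t)\|^2_{B(t)^{-1}}\leq v^2/\lambda_{\min}$ and $\mathbb{E}[(Z-\sqrt{2\log(1/\delta)})^+]\leq \sqrt{2/\pi}\,\delta$ for a standard normal $Z$, the contribution per arm per round is at most $\delta v\sqrt{2/(\pi\lambda_{\min})}$, summing over $n$ rounds and $k$ arms to produce the $nk\delta v\sqrt{2/(\pi\lambda_{\min})}$ term.

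It remains to bound the cumulative width $\sum_{t=1}^n\|\boldsymbol{b}_{A_t}(t)\|_{B(t)^{-1}}$. Using $B(t)\succeq \lambda_{\min}(B(t))I_d$ and $\|\boldsymbol{b}_i(t)\|\leq 1$, each summand is at most $1/\sqrt{\lambda_{\min}(B(t))}$. For $t=1$ this is at most $1/\sqrt{\lambda_{\min}}$ since $B(1)=\Sigma_*^{-1}$; for $t\geq 2$ Assumption~\ref{con1} gives $\lambda_{\min}(B(t))\geq 4\vartheta(t-1)$, so using the elementary estimate $\sum_{t=2}^n 1/\sqrt{t-1}\leq 2\sqrt{n-1}$ produces the contribution $\sqrt{(n-1)/\vartheta}$. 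Multiplying by $2c=2v\sqrt{2\log(1/\delta)}$ yields the first term of the stated bound, and adding the failure term finishes the proof.

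The main technical obstacle is the interaction between the good-event indicator $\mathbf{1}_{G_t}$ and the posterior-matching identity: since $G_t$ depends on $\boldsymbol{\mu}$, it breaks the symmetric coupling of $A_t^*$ and $A_t$, so posterior matching must be applied to $\mathbb{E}[U_t(A_t^*)-L_t(A_t)\mid H_{t-1}]$ unconditionally and the residual $\mathbf{1}_{G_t^c}$ contribution absorbed into the failure piece. This is also the reason the slightly loose Gaussian constant $\sqrt{2/\pi}$ (rather than the sharper $1/\sqrt{2\pi}$) appears in the second term of the bound; once this bookkeeping is handled, the rest reduces to routine eigenvalue manipulations enabled by Assumption~\ref{con1}.
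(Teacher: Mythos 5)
Your proposal is correct and follows essentially the same route as the paper's proof: the posterior-matching identity between $A_t^*$ and $A_t$ given the history, a confidence-width decomposition with $C_i(t)=v s_i(t)\sqrt{2\log(1/\delta)}$, Gaussian tail integrals for the bad event yielding the $nk\delta v\sqrt{2/(\pi\lambda_{\min})}$ term, and the eigenvalue bound from Assumption~\ref{con1} to control $\sum_t s_{A_t}(t)$. The only cosmetic differences are that the paper obtains the $\sqrt{2/\pi}$ constant by summing two sharp $1/\sqrt{2\pi}$ tail contributions rather than one loose positive-part estimate, and it telescopes $\sum_t 1/\sqrt{t-1}$ via $\sqrt{t-1}-\sqrt{t-2}$ instead of an integral bound; both give identical constants.
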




Lemma \ref{theo1} shows that using the true instance prior as the prior for TS leads to a reduction in regret, which is reasonable because the reduction in uncertainty about the bandit instance translates into lower regret.
Lemma \ref{theo2} establishes bounds on the difference in the $n$-round rewards achieved by TS  employing distinct priors.

\begin{lemma}\label{theo2}
    Let $\boldsymbol{\mu }$ be a   instance generated as $\boldsymbol{\mu } \sim P_*=\mathcal{N} \left( \boldsymbol{\mu } _*,v^2\varSigma _* \right)$.
  $\mathcal{N} \left( \hat{\boldsymbol{\mu}},v^2\varSigma _* \right) $ and $\mathcal{N} \left( \tilde{\boldsymbol{\mu}},v^2\varSigma _* \right) $ are two TS priors such that  $
        \left\| \hat{\boldsymbol{\mu}}-\tilde{\boldsymbol{\mu}} \right\| \leqslant \varepsilon
    $.  Let  $\hat{A}_t$ and $\tilde{A}_t$ be the pulled arms under these priors in round $t$. $\vartheta  $ is the  constant    in Assumption  \ref{con1}. Then for any $\delta>0$,
$    \mathbb{E} \left[ \sum_{t=1}^n{\left( \boldsymbol{b}_{\hat{A}_t}\left( t \right) -\boldsymbol{b}_{\tilde{A}_t}\left( t \right) \right) ^T\boldsymbol{\mu }} \right] \leqslant \frac{2k}{v}\left( \left\| \boldsymbol{\mu }_* \right\| +v\sqrt{\frac{2\log \left( \frac{1}{\delta} \right)}{\lambda _{\min}}} \right) \left( \sqrt{\frac{1}{\lambda _{\min}}}+\sqrt{\frac{n-1}{\vartheta }} \right) \sqrt{\frac{2\lambda _{\max}}{\pi}}\varepsilon +4nk\delta v\left( \sqrt{\frac{1}{2\pi \lambda _{\min}}}+\left\| \boldsymbol{\mu }_* \right\| \right) .$
\end{lemma}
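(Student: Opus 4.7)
The plan is to couple the two Thompson Sampling runs so they share the same bandit instance $\boldsymbol{\mu}\sim P_*$ and the Gaussian noise used for posterior sampling, then decompose the expected reward difference round by round. To peel off the contribution from extreme values of $\|\boldsymbol{\mu}\|$, I would first condition on the high-probability event $G=\{\|\boldsymbol{\mu}\|\leq \|\boldsymbol{\mu}_*\|+v\sqrt{2\log(1/\delta)/\lambda_{\min}}\}$. Since $\boldsymbol{\mu}\sim\mathcal{N}(\boldsymbol{\mu}_*,v^2\varSigma_*)$ and $\lambda_{\min}(\varSigma_*^{-1})=\lambda_{\min}$, a standard Gaussian tail bound gives $\Pr[G^c]\leq\delta$. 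On $G^c$, the crude estimate $|(\boldsymbol{b}_{\hat{A}_t}(t)-\boldsymbol{b}_{\tilde{A}_t}(t))^T\boldsymbol{\mu}|\leq 2\|\boldsymbol{\mu}\|$ combined with a direct moment computation of $\mathbb{E}[\|\boldsymbol{\mu}\|\mathbf{1}_{G^c}]$ produces the additive term $4nk\delta v(\sqrt{1/(2\pi\lambda_{\min})}+\|\boldsymbol{\mu}_*\|)$, while the same bound on $\|\boldsymbol{\mu}\|$ on $G$ supplies the factor $\|\boldsymbol{\mu}_*\|+v\sqrt{2\log(1/\delta)/\lambda_{\min}}$ in the main term.

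On the event $G$, I would apply a union bound over arms to write $\mathbb{E}\left[(\boldsymbol{b}_{\hat{A}_t}(t)-\boldsymbol{b}_{\tilde{A}_t}(t))^T\boldsymbol{\mu}\,\mathbf{1}_G\right]\leq 2(\|\boldsymbol{\mu}_*\|+v\sqrt{2\log(1/\delta)/\lambda_{\min}})\sum_{i=1}^{k}\Pr[\hat{A}_t=i,\tilde{A}_t\neq i]$, which explains the factor of $k$. To bound each $\Pr[\hat{A}_t=i,\tilde{A}_t\neq i]$, I would exploit that under the coupling the sampled parameters $\tilde{\boldsymbol{\mu}}(t)$ on both runs share the same Gaussian noise and differ only by a deterministic mean shift propagated from the prior-mean gap $\|\hat{\boldsymbol{\mu}}-\tilde{\boldsymbol{\mu}}\|\leq\varepsilon$ via the closed-form posterior update. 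A multivariate Gaussian total-variation estimate along the direction $\boldsymbol{b}_i$, using that $\lambda_{\max}(\varSigma_*^{-1})=\lambda_{\max}$ controls the normalized shift, then converts this prior-mean gap into the factor $\sqrt{2\lambda_{\max}/\pi}\cdot\varepsilon/v$.

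Aggregating across rounds, Assumption~\ref{con1} supplies $\lambda_{\min}(B(t))\geq\lambda_{\min}+4\vartheta(t-1)$, so the same arithmetic as in Lemma~\ref{theo1} yields $\sum_{t=1}^{n}1/\sqrt{\lambda_{\min}+4\vartheta(t-1)}\leq\sqrt{1/\lambda_{\min}}+\sqrt{(n-1)/\vartheta}$, recovering the remaining factor in the stated bound. The main obstacle I anticipate is propagating the prior-mean gap $\hat{\boldsymbol{\mu}}-\tilde{\boldsymbol{\mu}}$ through the two runs once they begin to diverge in their arm selections and therefore develop different $B(t)$ matrices; the cleanest route is to bound the per-round divergence probability uniformly via a prior-level total-variation estimate, so that the recursive posterior update of Lemma~\ref{Qs} need not be unwound along the divergent branch.
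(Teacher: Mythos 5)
Your overall route is the same as the paper's: truncate on a ``$\boldsymbol{\mu}$ close to $\boldsymbol{\mu}_*$'' event to produce the additive $4nk\delta v(\cdot)$ term, bound the per-round action-distribution discrepancy by a Gaussian total-variation estimate driven by the prior-mean gap, and sum $1/\sqrt{\lambda_{\min}(B(t))}$ over rounds via Assumption \ref{con1}. However, there is one step that fails as stated. Your truncation event $G=\{\|\boldsymbol{\mu}\|\leqslant\|\boldsymbol{\mu}_*\|+v\sqrt{2\log(1/\delta)/\lambda_{\min}}\}$ does \emph{not} satisfy $\Pr[G^c]\leqslant\delta$ by ``a standard Gaussian tail bound'': $\|\boldsymbol{\mu}-\boldsymbol{\mu}_*\|$ is the norm of a $d$-dimensional Gaussian, and controlling it at level $\delta$ costs an extra dimension factor (compare the $\sqrt{d\,\lambda_{\max}(\varSigma_Q)\log(2d/\delta)}$ radius in Lemma \ref{theo3G}). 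The paper never bounds the norm of $\boldsymbol{\mu}$; it works with the $k$ scalar projections $\theta_i(t)=\boldsymbol{b}_i(t)^T\boldsymbol{\mu}\sim\mathcal{N}(\theta_{i,*}(t),v^2\sigma_i(t)^2)$ with $\sigma_i(t)^2\leqslant 1/\lambda_{\min}$, takes the event $E_t=\{\forall i:|\theta_i(t)-\theta_{i,*}(t)|\leqslant c_i(t)\}$, and computes the truncated first moment directly, summing over the $k$ arms. That computation is precisely where the factor $k$, the term $\sqrt{1/(2\pi\lambda_{\min})}$, and the coefficient $\|\boldsymbol{\mu}_*\|+v\sqrt{2\log(1/\delta)/\lambda_{\min}}$ (a bound on $|\boldsymbol{b}_i(t)^T\boldsymbol{\mu}|$ on $E_t$, using $\|\boldsymbol{b}_i(t)\|\leqslant 1$) come from; your norm-based event would not reproduce these constants even after repairing the tail bound.

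The second issue is that the crux of the lemma --- what happens after the two runs' histories diverge --- is left as an ``anticipated obstacle'' rather than closed. Your proposed resolution (bound the per-round divergence probability by a total-variation estimate so the divergent branch never needs to be unwound) is indeed what the paper does: it conditions both runs on the \emph{same} history $h$, notes that then $\hat{\boldsymbol{\mu}}(t)-\tilde{\boldsymbol{\mu}}(t)=B(t)^{-1}\varSigma_*^{-1}(\hat{\boldsymbol{\mu}}-\tilde{\boldsymbol{\mu}})$ is a deterministic shift, bounds the per-arm TV by $\frac{2}{\sqrt{2\pi v^2 s_i(t)^2}}\,|\boldsymbol{b}_i(t)^TB(t)^{-1}\varSigma_*^{-1}(\hat{\boldsymbol{\mu}}-\tilde{\boldsymbol{\mu}})|$ and uses the Rayleigh-quotient bound $\|\boldsymbol{b}_i(t)^TB(t)^{-1}\|/s_i(t)\leqslant\sqrt{1/\lambda_{\min}(B(t))}$ together with $\|\varSigma_*^{-1}\|\leqslant\sqrt{\lambda_{\max}}\cdot\sqrt{\lambda_{\max}}$-type control to extract $\sqrt{\lambda_{\max}}\,\varepsilon$; the reward cost of a divergence is then bounded trivially by $2(\|\boldsymbol{\mu}_*\|+c_0)$ on $E_t$ and multiplied by $\max_h\sum_i|\mathbb{P}(\hat{A}_t=i\mid h)-\mathbb{P}(\tilde{A}_t=i\mid h)|$, summed over $t$. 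You should carry this step out explicitly --- in particular the identity for $\hat{\boldsymbol{\mu}}(t)-\tilde{\boldsymbol{\mu}}(t)$ under a shared history and the $\max_h$ device --- since without it the linear-in-$\varepsilon$, $O(\sqrt{n})$ form of the main term is not established.
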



Lemma \ref{theo2} states that the difference in the  rewards of TS when utilizing distinct priors can be quantitatively constrained by the difference in the prior means.

The pivotal dependency in Lemma \ref{theo2} lies in the linearity of the bound with respect to   $\varepsilon$.
Lemma 5 presented in \cite{metaTS} established an $O(n^2)$ upper bound on the discrepancy in the   rewards attained by TS employing distinct priors for Gaussian bandits. Indeed, under Assumption \ref{con1},  we have reduced the bound to $O(\sqrt{n})$ for linear bandits. 

\subsection{Regret Analysis on Meta-TSLB}
Lemma \ref{theo3G} demonstrates the concentration property of the sample means $\boldsymbol{\mu }_*$ derived from the meta-posterior distribution $Q_s$.
\begin{lemma} \label{theo3G}
    Let $\boldsymbol{\mu }_*\sim Q $
    and the meta-posterior in task $s$ is $Q_s$.
    If $\lambda _{\max}(\varSigma _Q )  \geqslant \frac{2}{175\lambda _{\min}}$, then
    $
        \left\| {\boldsymbol{\mu}}_{Q,s}-\boldsymbol{\mu }_* \right\| \leqslant v\sqrt{d\left( \left( \lambda _{\max}(\varSigma _Q )-\frac{2}{175\lambda _{\min}} \right) \left( \frac{7}{8} \right) ^{s-1}+\frac{2}{175\lambda _{\min}} \right) \log \left( \frac{2d}{\delta} \right)}
    $
    holds jointly over all tasks $s \in[m]$ with probability at least $1-m \delta$.
\end{lemma}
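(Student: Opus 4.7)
The plan is to combine the Bayesian conjugate structure encoded in Lemma~\ref{Qs} with a spectral analysis of the meta-posterior covariance, and then invoke standard Gaussian concentration. Under the model of Section~II, Lemma~\ref{Qs} is exactly the statement that $\boldsymbol{\mu}_*\mid H_{1:s-1}\sim Q_s=\mathcal{N}(\boldsymbol{\mu}_{Q,s},v^2\varSigma_{Q,s})$, so $\boldsymbol{\mu}_*-\boldsymbol{\mu}_{Q,s}$ is a centred Gaussian whose every coordinate has variance at most $v^2\lambda_{\max}(\varSigma_{Q,s})$. A one-dimensional Gaussian tail bound applied coordinate-wise, together with a union bound over the $d$ coordinates, yields
\[
    \|\boldsymbol{\mu}_*-\boldsymbol{\mu}_{Q,s}\|\leqslant v\sqrt{d\,\lambda_{\max}(\varSigma_{Q,s})\log(2d/\delta)}
\]
with probability at least $1-\delta$ (absorbing a factor of $2$ into constants), and a further union bound over $s\in[m]$ produces the claimed joint probability of $1-m\delta$. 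Everything therefore reduces to controlling $\lambda_{\max}(\varSigma_{Q,s})$.

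To analyse $\lambda_{\max}(\varSigma_{Q,s})$, I would first simplify the update in Lemma~\ref{Qs}. Setting $M_s=\sum_{t=1}^n\boldsymbol{b}_{A_t}(t)\boldsymbol{b}_{A_t}(t)^T$ and $K_s=\varSigma_*^{1/2}M_s\varSigma_*^{1/2}$, the identities $\varSigma_*M_s\varSigma_*+\varSigma_*=\varSigma_*^{1/2}(I+K_s)\varSigma_*^{1/2}$ and $I-(I+K_s)^{-1}=(I+K_s)^{-1}K_s$ rewrite the update as
\[
    \varSigma_{Q,s+1}^{-1}=\varSigma_{Q,s}^{-1}+\varSigma_*^{-1/2}(I+K_s)^{-1}K_s\,\varSigma_*^{-1/2}.
\]
The added term is PSD, and using the inequality $\lambda_{\min}(A^{1/2}BA^{1/2})\geqslant\lambda_{\min}(A)\lambda_{\min}(B)$ together with the monotonicity of $\lambda\mapsto\lambda/(1+\lambda)$ gives $\varSigma_{Q,s+1}^{-1}\succeq\varSigma_{Q,s}^{-1}+\eta I$ for some $\eta>0$ depending only on $\lambda_{\min}$ and a lower bound on $\lambda_{\min}(K_s)$. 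This Loewner inequality in turn implies the scalar recursion
\[
    \lambda_{\max}(\varSigma_{Q,s+1})\leqslant\frac{\lambda_{\max}(\varSigma_{Q,s})}{1+\eta\,\lambda_{\max}(\varSigma_{Q,s})}.
\]

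With this nonlinear recursion in hand, I would dominate it by the linear recursion $a_{s+1}\leqslant\tfrac{7}{8}a_s+\tfrac{1}{700\lambda_{\min}}$ for $a_s:=\lambda_{\max}(\varSigma_{Q,s})$, after choosing $\eta$ so that the targeted constants $7/8$ and $2/(175\lambda_{\min})$ emerge; unrolling the linear recursion yields the closed-form bound $a_s\leqslant(\lambda_{\max}(\varSigma_Q)-\tfrac{2}{175\lambda_{\min}})(\tfrac{7}{8})^{s-1}+\tfrac{2}{175\lambda_{\min}}$. The hypothesis $\lambda_{\max}(\varSigma_Q)\geqslant\tfrac{2}{175\lambda_{\min}}$ is precisely what ensures the leading coefficient is non-negative, so that this closed form remains a legitimate upper bound on the non-increasing sequence $(a_s)$. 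Substituting this bound into the Gaussian concentration estimate from the first step then gives the lemma.

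The main obstacle will be extracting the uniform lower bound $\varSigma_*^{-1/2}(I+K_s)^{-1}K_s\varSigma_*^{-1/2}\succeq\eta I$: if $K_s$ (equivalently $M_s$) is rank-deficient in a given task, $\lambda_{\min}((I+K_s)^{-1}K_s)$ vanishes and a uniform positive $\eta$ is not immediate. I expect this to be resolved by invoking (a variant of) Assumption~\ref{con1}, which after $n$ TS rounds forces $\lambda_{\min}(B(n+1))\geqslant 4\vartheta n$ and hence bounds $\lambda_{\min}(M_s)$, and therefore $\lambda_{\min}(K_s)$, below by a positive constant of order $\vartheta n$; this in turn makes $\eta$ proportional to $\lambda_{\min}$ and enables the linear-recursion reduction that produces the constants $7/8$ and $2/(175\lambda_{\min})$ appearing in the lemma.
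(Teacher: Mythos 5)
Your proposal follows essentially the same route as the paper: Gaussian concentration of $\boldsymbol{\mu}_*\mid H_{1:s-1}\sim\mathcal{N}(\boldsymbol{\mu}_{Q,s},v^2\varSigma_{Q,s})$ reduced coordinate-wise with a union bound over the $d$ coordinates and the $m$ tasks, combined with the observation that the Lemma~\ref{Qs} update adds a positive semi-definite term to $\varSigma_{Q,s}^{-1}$, which yields a scalar recursion on $\lambda_{\max}(\varSigma_{Q,s})$ that is dominated by a linear recursion with ratio $7/8$ and unrolled to the stated geometric-plus-constant bound. The one step you flag as the main obstacle --- securing a uniform lower bound $\eta$ of order $\lambda_{\min}$ on the added term, which requires $\sum_{t}\boldsymbol{b}_{A_t}(t)\boldsymbol{b}_{A_t}(t)^T$ to dominate $\varSigma_*^{-1}$ --- is precisely the step the paper itself passes over by asserting $\lambda_{\min}\bigl(\varSigma_*^{-1}-(\varSigma_*M\varSigma_*+\varSigma_*)^{-1}\bigr)\geqslant\tfrac{1}{2}\lambda_{\min}(\varSigma_*^{-1})$ without justification, so your treatment is no less complete than the original.
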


It can be found that when $s$ increases, the upper bound of  $\left\| {\boldsymbol{\mu}}_{Q,s}-\boldsymbol{\mu }_* \right\|$ gradually decreases.
That is, the meta-posterior concentrates as the number of tasks increases.
Now, we present the most significant conclusion of this work regarding the Bayes regret bound of Meta-TSLB.

\begin{theorem}\label{theo_mainG}
    $\vartheta  $ is the  constant    in Assumption  \ref{con1}.
    If $\lambda _{\max}(\varSigma _Q )  \geqslant \frac{2}{175\lambda _{\min}}$, 
    the Bayes regret of Meta-TSLB over $m$ tasks with
    $n$ rounds each is
$	R\left( m,n;P_* \right)  \leqslant 2mv\left( \sqrt{\frac{1}{\lambda _{\min}}}+\sqrt{\frac{n-1}{\vartheta }} \right) \sqrt{2\log \left( n \right)}+mkv\sqrt{\frac{2}{\pi \lambda _{\min}}}  
	   +\left\{2k\left( 4\log \left( m \right) u_2\left( \delta \right) +mu_3\left( \delta \right) \right) \left( u_1\left( \delta \right) +v\sqrt{\frac{2\log \left( n \right)}{\lambda _{\min}}} \right)\right.\\ \left.\cdot \left( \sqrt{\frac{1}{\lambda _{\min}}}+\sqrt{\frac{n-1}{\vartheta }} \right) \sqrt{\frac{2\lambda _{\max}}{\pi}}\right\} 
	  +4mkv\left( \sqrt{\frac{1}{2\pi \lambda _{\min}}}+u_1\left( \delta \right) \right) $,
    with probability at least $1- ( m+ 1) \delta$, where
    $
        u_1\left( \delta \right) =\left\| \boldsymbol{\mu }_Q \right\| +v\sqrt{d\lambda _{\max}(\varSigma _Q )\log \left( \frac{2d}{\delta} \right)} $, $  u_2\left( \delta \right) =\sqrt{d\left( \lambda _{\max}(\varSigma _Q )-\frac{2}{175\lambda _{\min}} \right) \log \left( \frac{2d}{\delta} \right)}$ and  $u_3\left( \delta \right) =\sqrt{d\frac{2}{175\lambda _{\min}}\log \left( \frac{2d}{\delta} \right)}.
    $
    The probability is over realizations of $
        \boldsymbol{\mu }_*,\boldsymbol{\mu }_s$.

\end{theorem}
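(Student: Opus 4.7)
The plan is to decompose the Bayes regret of Meta-TSLB into two pieces and bound each separately: (i) the regret that TS would incur if it were run with the true instance prior $P_* = \mathcal{N}(\boldsymbol{\mu}_*, v^2\varSigma_*)$ on every task, and (ii) the surplus caused by Meta-TSLB actually running TS with $P_s = \mathcal{N}(\boldsymbol{\mu}_{Q,s}, v^2\varSigma_*)$ instead of $P_*$. Piece (i) will be handled directly by Lemma \ref{theo1} applied task by task; piece (ii) will be controlled by combining Lemma \ref{theo2}, which linearly bounds the per-task reward gap of two TS runs in the distance between their prior means, with the concentration of the meta-posterior supplied by Lemma \ref{theo3G}.

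For piece (i), I would invoke Lemma \ref{theo1} with the choice $\delta = 1/n$, so that $\sqrt{2\log(1/\delta)}$ becomes $\sqrt{2\log n}$ and the additive $nk\delta v\sqrt{2/(\pi\lambda_{\min})}$ term collapses to the constant $kv\sqrt{2/(\pi\lambda_{\min})}$. Summing the resulting per-task bound over $s = 1,\ldots,m$ reproduces exactly the first line of the stated inequality.

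For piece (ii), I would first apply the Gaussian tail bound to $\boldsymbol{\mu}_* \sim Q$, obtaining $\|\boldsymbol{\mu}_*\| \le u_1(\delta)$ with probability at least $1-\delta$. On this event, I would use Lemma \ref{theo2} on each task with $\varepsilon = \varepsilon_s := \|\boldsymbol{\mu}_{Q,s}-\boldsymbol{\mu}_*\|$, again calibrating the internal tail parameter so that the $\sqrt{2\log(1/\delta)/\lambda_{\min}}$ factor reads as $\sqrt{2\log(n)/\lambda_{\min}}$ and substituting $\|\boldsymbol{\mu}_*\| \le u_1(\delta)$ in the prefactor. Next, Lemma \ref{theo3G} supplies, jointly over all $s$ with probability at least $1-m\delta$, the decomposition $\varepsilon_s \le u_2(\delta)\,(7/8)^{(s-1)/2} + u_3(\delta)$ obtained by applying $\sqrt{a+b}\le\sqrt{a}+\sqrt{b}$ inside its square root. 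Summing over $s$, the exponentially decaying term contributes at most a $\log(m)$-scale factor times $u_2(\delta)$, while the stationary term contributes $m\,u_3(\delta)$, producing the two coefficients $4\log(m) u_2(\delta)$ and $m\, u_3(\delta)$ appearing in the theorem. A union bound over the Gaussian-tail event and the meta-posterior concentration event then yields the overall $1-(m+1)\delta$ probability guarantee.

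The main obstacle is the second component: the meta-learning improvement hinges on the two-scale structure $u_2(\delta)\,(7/8)^{(s-1)/2} + u_3(\delta)$ furnished by Lemma \ref{theo3G}, and the summation over $s$ must carefully separate the transient piece (which sums to a $\log(m)$-scale bound) from the irreducible stationary piece (which contributes $O(m)$). A naive bound that lumps $\varepsilon_s$ into a single quantity would wipe out the $O(\log m)$ improvement and leave an $O(m)$ factor throughout. The remainder is routine bookkeeping of constants and calibration of $\delta$ across the three independent probabilistic events.
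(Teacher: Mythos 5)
Your proposal is correct and follows essentially the same route as the paper: the same decomposition into the regret of a hypothetical TS run with the true prior $P_*$ (bounded via Lemma \ref{theo1} with $\delta = 1/n$) plus the reward gap between TS with prior $P_s$ and TS with prior $P_*$ (bounded via Lemma \ref{theo2} with $\varepsilon_s = \left\| \boldsymbol{\mu}_{Q,s} - \boldsymbol{\mu}_* \right\|$), the same use of Lemma \ref{theo3G} with the $\sqrt{a+b}\leqslant\sqrt{a}+\sqrt{b}$ split to separate the transient $4\log(m)u_2(\delta)$ term from the stationary $m u_3(\delta)$ term, and the same union bound over the $\left\|\boldsymbol{\mu}_*\right\|\leqslant u_1(\delta)$ event and the $m$ concentration events. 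No gaps; this matches the paper's argument step for step.
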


If we only focus on the number of tasks $m$ and the number of rounds $n$, our bound  can be summarized as $
O\left( \left( m+\log \left( m \right) \right) \sqrt{n\log \left( n \right)} \right)$, which is an improvement compared to the $
O\left( \,\,m\sqrt{n\log \left( n \right)}+\sqrt{m}\left( n^2\sqrt{\log \left( n \right)} \right)\right)$ bound  of Meta-TS given by Kveton et al.   for the Gaussian bandits, which are a special case of linear bandits.
Our bound presented in Theorem \ref{theo_mainG} can be interpreted as follows: The initial two terms represent the regret incurred by TS when equipped with the accurate prior $P_*$. This regret scales linearly with the number of tasks $m$, reflecting the fact that Meta-TSLB tackles $ m$ distinct exploration problems. The subsequent two terms encapsulates the expense of learning $P_*$, which is sublinear with respect to $m$. Consequently, in scenarios where $m$ is large, Meta-TSLB demonstrates near-optimal performance, underscoring its efficiency and effectiveness.


\subsection{Regret Analysis on Meta-TS applied to Linear Bandits}
The analysis of the Bayes regret bound for Meta-TS given in \cite{metaTS} is limited to Gaussian bandits.
In this subsection, by leveraging the proof of Theorem \ref{theo_mainG}, we can directly derive the bound for Meta-TS   when applied to linear bandits.
\begin{lemma} \label{theo3}
    Let $\boldsymbol{\mu }_*\sim Q $
    and the prior parameters in task $s$  of Meta-TS be sampled as $
        \hat{\boldsymbol{\mu}}_s|H_{1:s}\sim Q_s
    $.  If $\lambda _{\max}(\varSigma _Q )  \geqslant \frac{2}{175\lambda _{\min}}$, then
    $
        \left\| {\boldsymbol{\mu}}_{Q,s}-\boldsymbol{\mu }_* \right\| \leqslant 2v\sqrt{d\left( \left(  \lambda _{\max}\left( \varSigma _{Q} \right)-\frac{2}{175\lambda _{\min}} \right) \left( \frac{7}{8} \right) ^{s-1}+\frac{2}{175\lambda _{\min}} \right) \log \left( \frac{4d}{\delta} \right)}
    $
    holds jointly over all tasks $s \in[m]$ with probability at least $1-m \delta$.
\end{lemma}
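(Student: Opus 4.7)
The plan is to mirror the proof of Lemma \ref{theo3G}, with one extra triangle-inequality step that accounts for the additional randomness introduced when Meta-TS samples $\hat{\boldsymbol{\mu}}_s \sim Q_s$ instead of using $\boldsymbol{\mu}_{Q,s}$ directly as the prior mean of the per-task TS. The two ingredients I need are: (i) a geometric contraction of the meta-posterior covariance, and (ii) a Gaussian concentration bound that connects $\boldsymbol{\mu}_{Q,s}$ to $\boldsymbol{\mu}_*$ while passing through the sampled $\hat{\boldsymbol{\mu}}_s$.

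For (i), the key observation is that in a Gaussian-Gaussian Bayesian conjugate update the posterior covariance depends only on the prior covariances $\Sigma_{Q,s}$ and $\Sigma_*$ and on the empirical design matrix $\sum_{t=1}^{n}\boldsymbol{b}_{A_t}(t)\boldsymbol{b}_{A_t}(t)^T$, and not at all on the prior means. Consequently the recursion for $\Sigma_{Q,s+1}$ takes exactly the same form as in Lemma \ref{Qs}, even when the per-task prior mean is the sampled $\hat{\boldsymbol{\mu}}_s$ rather than $\boldsymbol{\mu}_{Q,s}$. I would therefore lift the eigenvalue chain that was used to prove Lemma \ref{theo3G} verbatim, obtaining
\[
\lambda_{\max}(\Sigma_{Q,s}) \leq \left(\lambda_{\max}(\Sigma_Q)-\tfrac{2}{175\lambda_{\min}}\right)\left(\tfrac{7}{8}\right)^{s-1}+\tfrac{2}{175\lambda_{\min}}.
\]

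For (ii), I would use the triangle inequality
\[
\|\boldsymbol{\mu}_{Q,s}-\boldsymbol{\mu}_*\| \leq \|\boldsymbol{\mu}_{Q,s}-\hat{\boldsymbol{\mu}}_s\| + \|\hat{\boldsymbol{\mu}}_s-\boldsymbol{\mu}_*\|.
\]
By the Bayesian-posterior interpretation of $Q_s$, conditional on $H_{1:s-1}$ we have $\boldsymbol{\mu}_* \mid H_{1:s-1} \sim Q_s$, while the Meta-TS sampling step gives $\hat{\boldsymbol{\mu}}_s \mid H_{1:s-1} \sim Q_s$ independently. Hence each of the two differences is a centered Gaussian vector with covariance $v^2\Sigma_{Q,s}$ (respectively $2v^2\Sigma_{Q,s}$ for the second term, by independence), and a standard Gaussian quadratic-form tail bound applied to each, together with a union bound over the two terms and over $s\in[m]$, delivers the stated $2v\sqrt{d(\cdots)\log(4d/\delta)}$ control at failure probability at most $m\delta$. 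The doubling of the deterministic constant ($2v$ versus $v$) and the enlargement of the log argument from $2d/\delta$ to $4d/\delta$ both originate from this union bound.

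The main obstacle I expect is cleanly justifying the conditional law $\boldsymbol{\mu}_*\mid H_{1:s-1}\sim Q_s$ in the Meta-TS dynamics: because the arms pulled during task $s$ depend on $\hat{\boldsymbol{\mu}}_s$ rather than on $\boldsymbol{\mu}_{Q,s}$, one must verify that the recursion stated in Lemma \ref{Qs} is still the correct Bayesian posterior update in the Meta-TS case. I would handle this by conditioning on the realized arm sequence (which is measurable with respect to $\hat{\boldsymbol{\mu}}_s$ and the past history, both independent of $\boldsymbol{\mu}_*$ given $Q_s$) and then applying the conjugate-Gaussian update to the reward observations alone. A secondary, purely arithmetic obstacle is absorbing the $\sqrt{2}$ that appears in $\|\hat{\boldsymbol{\mu}}_s-\boldsymbol{\mu}_*\|$ into the target constant $2$; this can be done either by a slightly looser union-bound allocation of $\delta$ between the two terms or by noting that $1+\sqrt{2}$ is already controlled once the $\log(4d/\delta)$ inflation is accounted for.
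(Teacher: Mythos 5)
Your overall strategy is the paper's: carry the covariance contraction of Lemma \ref{theo3G} over unchanged (the recursion for $\varSigma_{Q,s}$ indeed does not involve the prior mean, so this step is fine), then combine a triangle inequality with Gaussian tail bounds and union bounds over the two terms and over $s\in[m]$. However, your decomposition is oriented the wrong way around, and this creates a genuine quantitative gap. You write $\left\| \boldsymbol{\mu}_{Q,s}-\boldsymbol{\mu}_* \right\| \leqslant \left\| \boldsymbol{\mu}_{Q,s}-\hat{\boldsymbol{\mu}}_s \right\| +\left\| \hat{\boldsymbol{\mu}}_s-\boldsymbol{\mu}_* \right\|$ and treat the second term as $\mathcal{N}\left( \boldsymbol{0},2v^2\varSigma_{Q,s} \right)$ by independence. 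The two terms then contribute $1$ and $\sqrt{2}$ times the single-deviation radius, for a total constant of $1+\sqrt{2}\approx 2.414>2$. This cannot be repaired by reallocating $\delta$ between the two events: as $\delta\to 0$ both logarithms are asymptotically $\log(4d/\delta)$, so the constant stays at $1+\sqrt{2}$ and the claimed factor $2$ is not recovered. Your closing remark that the $\log(4d/\delta)$ inflation absorbs $1+\sqrt{2}$ is not correct --- that inflation is already spent on getting each of the two events down to probability $\delta/2$.

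The paper avoids this entirely by routing the triangle inequality through the \emph{common conditional mean}: conditionally on $H_{1:s-1}$, both $\boldsymbol{\mu}_*$ and $\hat{\boldsymbol{\mu}}_s$ are distributed as $Q_s=\mathcal{N}\left( \boldsymbol{\mu}_{Q,s},v^2\varSigma_{Q,s} \right)$, so each of $\left\| \boldsymbol{\mu}_*-\boldsymbol{\mu}_{Q,s} \right\|$ and $\left\| \hat{\boldsymbol{\mu}}_s-\boldsymbol{\mu}_{Q,s} \right\|$ is a centered deviation with covariance $v^2\varSigma_{Q,s}$ (no variance doubling), each is at most $\varepsilon_s=v\sqrt{d(\cdots)\log(4d/\delta)}$ with conditional probability at least $1-\delta/2$, and the union bound gives $\left\| \hat{\boldsymbol{\mu}}_s-\boldsymbol{\mu}_* \right\| \leqslant 2\varepsilon_s$ jointly over $s\in[m]$ with probability at least $1-m\delta$. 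Note also that the quantity the paper's proof actually controls is $\left\| \hat{\boldsymbol{\mu}}_s-\boldsymbol{\mu}_* \right\|$, not $\left\| \boldsymbol{\mu}_{Q,s}-\boldsymbol{\mu}_* \right\|$ as the lemma statement literally reads (the latter needs no triangle inequality at all and is already handled, with a better constant, by Lemma \ref{theo3G}); the factor $2$ and the $\log(4d/\delta)$ only make sense for the former, which is what the subsequent comparison of Meta-TSLB and Meta-TS relies on. If you reorient your triangle inequality accordingly, your argument becomes the paper's.
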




\begin{theorem}\label{theo_main}
       $\vartheta  $ is the  constant    in Assumption  \ref{con1}.  If $\lambda _{\max}(\varSigma _Q )  \geqslant \frac{2}{175\lambda _{\min}}$,
    the Bayes regret of Meta-TS over $m$ tasks with
    $n$ rounds each is
    $ 	R\left( m,n;P_* \right) \leqslant 2mv\left( \sqrt{\frac{1}{\lambda _{\min}}}+\sqrt{\frac{n-1}{\vartheta }} \right) \sqrt{2\log \left( n \right)}+mkv\sqrt{\frac{2}{\pi \lambda _{\min}}}
	 +\left\{4k\left( 4\log \left( m \right) u_4\left( \delta \right) +mu_5\left( \delta \right) \right) \left( u_1\left( \delta \right) +v\sqrt{\frac{2\log \left( n \right)}{\lambda _{\min}}} \right)\right. \\ \left.\cdot \left( \sqrt{\frac{1}{\lambda _{\min}}}+\sqrt{\frac{n-1}{\vartheta }} \right) \sqrt{\frac{2\lambda _{\max}}{\pi}}\right\}
	+4mkv\left( \sqrt{\frac{1}{2\pi \lambda _{\min}}}+u_1\left( \delta \right) \right) $,
    with probability at least $1- ( m+ 1) \delta$, where
    $
        u_4\left( \delta \right) =\sqrt{d\left(  \lambda _{\max}\left( \varSigma _{Q} \right)-\frac{2}{175\lambda _{\min}} \right) \log \left( \frac{4d}{\delta} \right)},\ u_5\left( \delta \right) =\sqrt{d\frac{2}{175\lambda _{\min}}\log \left( \frac{4d}{\delta} \right)}
    $ 
    and $u_1\left( \delta \right)$ is defined in Theorem \ref{theo_mainG}.
    The probability is over realizations of $
        \boldsymbol{\mu }_*,\boldsymbol{\mu }_s$ and $\hat{\boldsymbol{\mu}}_s.
    $

\end{theorem}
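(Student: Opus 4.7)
The plan is to mirror the three-part argument used for Theorem \ref{theo_mainG}, with the only genuinely new bookkeeping being the extra randomness from sampling $\hat{\boldsymbol{\mu}}_s\sim Q_s$ in Meta-TS. First, for each task $s\in[m]$, I would couple the arms $A_{s,t}$ actually pulled by Meta-TS with the arms $A_{s,t}^{\mathrm{TS}}$ of a virtual TS run using the true prior $P_*$, and decompose the per-task instantaneous regret $(\boldsymbol{b}_{A^{*}_{s,t}}(t)-\boldsymbol{b}_{A_{s,t}}(t))^{T}\boldsymbol{\mu}_s$ by inserting $\boldsymbol{b}_{A_{s,t}^{\mathrm{TS}}}(t)^{T}\boldsymbol{\mu}_s$ into a virtual-TS regret term (I) and a reward-gap term (II). Part (I) is handled per task by Lemma \ref{theo1} and, with $\delta=1/n$ converting $\sqrt{\log(1/\delta)}$ into $\sqrt{\log n}$, contributes the first two summands of the theorem after summing over $s$. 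Part (II) is handled by Lemma \ref{theo2} applied to the prior pair $\mathcal{N}(\hat{\boldsymbol{\mu}}_s,v^2\varSigma_*)$ and $\mathcal{N}(\boldsymbol{\mu}_*,v^2\varSigma_*)$ with mean gap $\varepsilon_s=\|\hat{\boldsymbol{\mu}}_s-\boldsymbol{\mu}_*\|$; this is precisely where the Meta-TS analysis diverges from the Meta-TSLB analysis, since in the latter the corresponding gap $\|\boldsymbol{\mu}_{Q,s}-\boldsymbol{\mu}_*\|$ is deterministic given the meta-posterior.

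To control $\varepsilon_s$, I would use the triangle inequality $\|\hat{\boldsymbol{\mu}}_s-\boldsymbol{\mu}_*\|\leq\|\hat{\boldsymbol{\mu}}_s-\boldsymbol{\mu}_{Q,s}\|+\|\boldsymbol{\mu}_{Q,s}-\boldsymbol{\mu}_*\|$, bound the first summand by a Gaussian tail on $\hat{\boldsymbol{\mu}}_s\sim Q_s$ and the second by the same meta-posterior concentration that underlies Lemma \ref{theo3G}; packaging the two tail events by a union bound produces the extra factor of two and the replacement $\log(2d/\delta)\mapsto\log(4d/\delta)$ that distinguish Lemma \ref{theo3} from Lemma \ref{theo3G}. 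After conditioning on the high-probability event of Lemma \ref{theo3}, I would bound $\|\boldsymbol{\mu}_*\|$ by the same $u_1(\delta)$-style Gaussian tail on $\boldsymbol{\mu}_*\sim Q$, then sum the $\varepsilon_s$ contributions over $s$ by splitting the index set into an exploration regime $s\leq s_0$ with $s_0=\Theta(\log m)$ and a converged regime $s>s_0$. Applying $\sqrt{a+b}\leq\sqrt{a}+\sqrt{b}$ to the Lemma \ref{theo3} bound and tuning $s_0$ so that the geometric tail $(7/8)^{(s-1)/2}$ becomes negligible past $s_0$ then produces exactly the $4\log(m)u_4(\delta)+m\,u_5(\delta)$ structure in the third summand, while the $m$ per-task failure events from Lemma \ref{theo3} together with the single failure event for the $\boldsymbol{\mu}_*$ tail yield the overall success probability $1-(m+1)\delta$.

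The main obstacle is keeping the conditioning straight across the three sources of randomness, namely $\boldsymbol{\mu}_*\sim Q$, the per-task prior-mean draws $\hat{\boldsymbol{\mu}}_s\sim Q_s$ (which feed into the subsequent meta-posteriors), and the task-level instances and rewards. Because Lemma \ref{theo2} is stated for a fixed pair of prior means, I must condition on the high-probability event of Lemma \ref{theo3} before taking expectation over the task-level randomness, and only then release the conditioning to obtain the Bayes-regret bound; the added layer of sampling in Meta-TS (versus Meta-TSLB) makes this step strictly more delicate than in the proof of Theorem \ref{theo_mainG}. Once this is arranged cleanly, the remaining work is routine constant-tracking that parallels Theorem \ref{theo_mainG}, and the extra factors of two in the third summand and in $u_4,u_5$ arise naturally from the triangle-inequality split above.
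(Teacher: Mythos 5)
Your proposal is correct and follows essentially the same route as the paper, which derives Theorem \ref{theo_main} by rerunning the proof of Theorem \ref{theo_mainG} with the decomposition into $R_{s,1}$ (Lemma \ref{theo1} at $\delta=1/n$) and $R_{s,2}$ (Lemma \ref{theo2} applied to the prior means $\hat{\boldsymbol{\mu}}_s$ and $\boldsymbol{\mu}_*$), and with Lemma \ref{theo3} replacing Lemma \ref{theo3G}. Your triangle-inequality-plus-union-bound treatment of $\|\hat{\boldsymbol{\mu}}_s-\boldsymbol{\mu}_*\|$ is exactly the content of the paper's proof of Lemma \ref{theo3}, and it correctly accounts for the factor $4k$ and the $\log(4d/\delta)$ appearing in $u_4,u_5$.
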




Comparing Theorem \ref{theo_mainG} with Theorem \ref{theo_main}, it can be found that the Bayes regret bound of Meta-TSLB is smaller than that of Meta-TS. This can be attributed to the fact that the bound of  $\left\| {\boldsymbol{\mu}}_{Q,s}-\boldsymbol{\mu }_* \right\|$ is smaller than that of $\left\| \hat{\boldsymbol{\mu}}_s-\boldsymbol{\mu }_* \right\|$.
    Specifically,  for task $s$, the prior  $\mathcal{N}(\boldsymbol{\mu }_{Q,s},v^2 \varSigma _* )$ of Meta-TSLB is more likely to be closer to $P_*=\mathcal{N}(\boldsymbol{\mu }_*,v^2 \varSigma _* )$ than prior $\mathcal{N}( \hat{\boldsymbol{\mu}}_s,v^2 \varSigma _* )$ of Meta-TS.

\section{Extended version of Linear Bandits}
\subsection{Linear Bandits with  Finite Potential Instance Priors}\label{SecLBF}

In this subsection,  we  assume access to $L$ potential instance priors $\mathcal{P} = \left \{ P^{( j) }\right \} _{j= 1}^L$, where $
    P^{(j)}=\mathcal{N} \left( \boldsymbol{\mu }^{\left( j \right)},v^2\varSigma ^{\left( j \right)} \right) $ for some fixed $\boldsymbol{\mu }^{\left( j \right)} $ and $\varSigma ^{\left( j \right)}$, $j=1,\cdots,L$.
The meta-prior $Q$ is the probability mass function on $L$  potential instance priors, that is, $Q(j)= w_j$  is the probability that $P^{(j)} $ is choosed and $\sum^L_{j=1}w_j=1$.
The tasks are generated as follows. First, the instance prior is set as $P_*=P^{(j_*)}$ where $j_*\sim Q$. Then, in each task $s$, a   instance is sampled as $\boldsymbol{\mu }_s \sim P_*$.

Meta-TSLB is implemented as follows. The meta-posterior in
task $s$ is $Q_s(j)={w}_{s,j} $,
where $ \boldsymbol{w}_s=(w_{s,1},\cdots,w_{s,L})$ is a vector of posterior beliefs into each instance prior. The instance prior in task $s$ is $P_s=P^{(j_s)}$  such that $w_{s,j_s}$ is the biggest element of $\boldsymbol{w}_s$. Suppose that  in task $s$, the pulled arm in round $t$ of TS is $A_t$.  After interacting with bandit instance $\boldsymbol{\mu }_s $, the meta-posterior is updated using $Q_{s+1}(j)\propto f(j)Q_s(j)$, where
$
    f\left( j \right) =\exp \left\{ -\frac{1}{2v^2}\left[ \left[ \boldsymbol{\mu }^{\left( j \right)} \right] ^T\left[ \varSigma ^{\left( j \right)} \right] ^{-1}\boldsymbol{\mu }^{\left( j \right)}-\boldsymbol{\xi }^TG\boldsymbol{\xi } \right] \right\}
$
and $  \boldsymbol{\xi }=G^{-1}\left( \sum_{t=1}^n{r_{A_t}\left( t \right) \boldsymbol{b}_{A_t}\left( t \right) ^T}+\left[ \boldsymbol{\mu }^{\left( j \right)} \right] ^T\left[ \varSigma ^{\left( j \right)} \right] ^{-1} \right) ^T$, 
$G=\sum_{t=1}^n{\boldsymbol{b}_{A_t}\left( t \right) \boldsymbol{b}_{A_t}\left( t \right) ^T}+\left[ \varSigma ^{\left( j \right)} \right] ^{-1}$. This conclusion can be directly derived from the proof of Lemma \ref{Qs}.

\subsection{Linear Bandits with Infinite Arms}\label{SecLBI}

A linear bandit instance with infinite arms is  parameterized by a vector $\boldsymbol{\mu } \in \mathbb{R}^d$. In the context of a $k$-arm linear bandit, the agent observes $k$ context vectors in round $t$, each uniquely associated with an arm. Conversely, in the realm of linear bandits with infinite arms, the agent observes a polyhedron $\mathcal{B}(t)$ in round $t$ and needs to select a context vector $\boldsymbol{b}(t)\in \mathbb{R}^d$ from   $\mathcal{B}(t)$ as the pulled arm. The reward $r\left( t \right)$ for this selected arm is then drawn  i.i.d.  from   $\mathcal{N}( \boldsymbol{b } \left( t \right) ^T\boldsymbol{\mu },v^2)$.

Thus, in TS (Algorithm \ref{alg2}), the step 5 needs to be modified to \enquote{
 Select $ \boldsymbol{b}^{\prime}\left( t \right) =\mathrm{arg}\underset{\boldsymbol{b}\left( t \right) \in \mathcal{B} \left( t \right)}{\max}\,\,\boldsymbol{b}\left( t \right) ^T\tilde{\boldsymbol{\mu}}\left( t \right) $ and observe $r\left( t \right) \sim \mathcal{N} \left( \boldsymbol{b }^{\prime}\left( t \right) ^T\boldsymbol{\mu } ,v^2 \right) $ }.

\subsection{Sequential Linear  Bandits}\label{SecSLB}
In this subsection, we define a new problem   named sequential linear bandit. A sequential linear bandit instance is   parameterized by $\boldsymbol{\mu}\in \mathbb{R}^d$, and contains $p$ linear bandits (Bandit 1, Bandit 2, $\cdots$, Bandit $p$). Figure \ref{P_SLB} shows the case of $p=3$. 
In round $t$, the agent needs to select one arm from   Bandit 1 to Bandit $p$ in sequence. Let $A^{(t)}_i$ denote the arm pulled in Bandit $i\in [p]$ and $\varGamma $ be a mapping,
\begin{equation}\label{eq12}
    \boldsymbol{b}\left( t \right) =\varGamma \left( \boldsymbol{b}_{1,A_{1}^{\left( t \right)}}\left( t \right) ,\boldsymbol{b}_{2,A_{2}^{\left( t \right)}}\left( t \right) ,\cdots ,\boldsymbol{b}_{p,A_{p}^{\left( t \right)}}\left( t \right) \right) \in \mathbb{R} ^d.
\end{equation}
The image of the context vectors corresponding to $p$ pulled arms under   mapping $\varGamma$  is a $d$-dimensional vector. Denote
$
\psi \left( \boldsymbol{\mu },A_{1}^{\left( t \right)},A_{2}^{\left( t \right)},\cdots ,A_{p}^{\left( t \right)} \right) =\boldsymbol{b}\left( t \right) ^T\boldsymbol{\mu }.
$
Then agent receives a reward  $r(t)$, which is drawn i.i.d. from $
    \mathcal{N} \left( \boldsymbol{b}\left( t \right) ^T\boldsymbol{\mu },v^2 \right).
$ Obviously, when $p = 1$ and $
    \varGamma $ is
identity mapping, sequential linear bandits is linear bandits  discussed in Section 2.
\begin{figure}[t]
    \centering
    \includegraphics[width=0.8\linewidth]{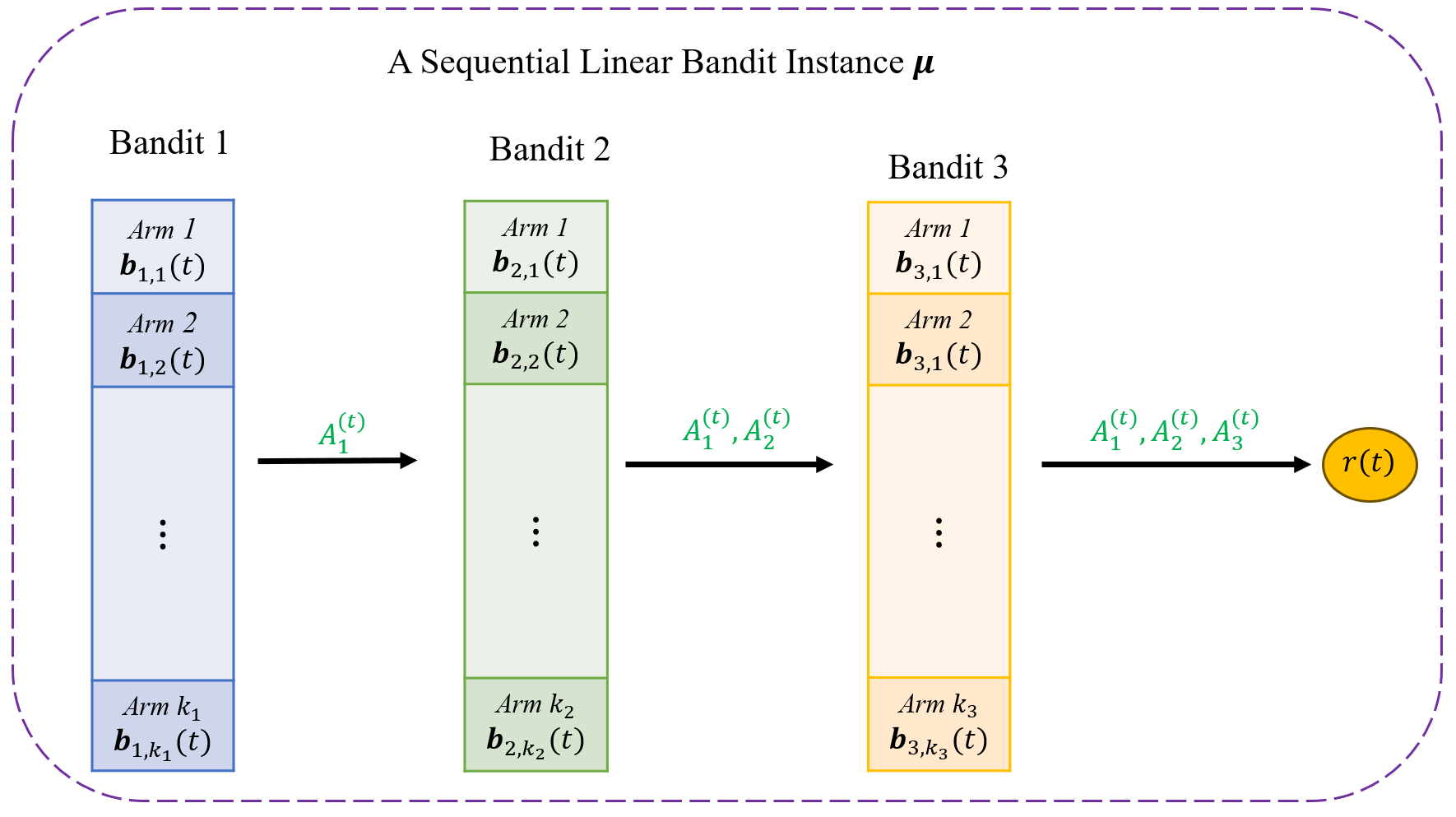}
    \caption{A sequential linear  bandit instance with $p=3$}
    \label{P_SLB}
\end{figure}


Now, we employ Thompson sampling to this specific instance. Note that in round t, when the agent needs to  pull an arm in Bandit $i$, it remains oblivious to the arm pulled in Bandit $j$ for $j>i$. Consequently, the optimal arm to pull in Bandit $i$ cannot be definitively determined. To circumvent this issue, we leverage the context vector of the arm pulled in round $t-1$ of Bandit $j$ as a predictive context vector for round $t$. The detailed procedural steps are outlined in Algorithm \ref{alg3}. By substituting  TS  with Algorithm \ref{alg3} within Meta-TSLB, we derive Meta-TSLB variants tailored specifically to address this problem.

\begin{algorithm}[h]
    {\bf {Setting:}}  Bandit instance $\boldsymbol{\mu }$; 

    {\bf {Input:}}  Prior  $P(1)=\mathcal{N} \left( \hat{\boldsymbol{\mu }}\left( 1 \right) ,v^2B\left( 1 \right) ^{-1} \right)$

    {\bf {For}} $t=1,2,\cdots,n$ {\bf {do}}

    \qquad Sample  $\tilde{\boldsymbol{\mu }}(t) \sim  P(t)= \mathcal{N} \left( \hat{\boldsymbol{\mu }}\left( t \right) ,v^2B\left( t \right) ^{-1} \right)$

    \qquad {\bf {For}} $i=1,2,\cdots,p$ {\bf {do}}

    \qquad\qquad Pull arm $
A_{i}^{\left( t \right)}=\mathrm{arg}\underset{A}{\max}\,\,\psi \left( \tilde{\boldsymbol{\mu}}(t),A_{1}^{\left( t \right)},\cdots ,A_{i-1}^{\left( t \right)},A,A_{i+1}^{\left( t-1 \right)}\cdots ,A_{p}^{\left( t-1 \right)} \right) 
$

    \qquad {\bf {End For}}

    \qquad  Observe reward $r \left( t \right) \sim \mathcal{N} \left( \boldsymbol{b }\left( t \right) ^T\boldsymbol{\mu } ,v^2 \right)
    $, where $\boldsymbol{b }\left( t \right)$ is defined in \eqref{eq12},  and update $P(t)$ to $P(t+1)$

    {\bf {End For}}
    \caption{Thompson Sampling  for Sequential Linear Bandits.}
    \label{alg3}
\end{algorithm}
\section{Experiment}

In this section, we  demonstrate the efficacy of Meta-TSLB through a series of   experiments. Each experiment comprises $m=20$ tasks, spanning a horizon of $n=200$ rounds, and all outcomes are averaged across 100 independent runs to ensure robustness, where $P^* \sim Q$ in each run.
We maintain a consistent setup with a reward standard deviation of $v=0.2$ and a context vector dimensionality of $d=5$. Except for the third and fourth experiments, the number of available arms is fixed at $k=20$. The mean vector, denoted as $\boldsymbol{\mu}_Q$ , is initialized to the zero vector $\boldsymbol{0}_d$, while the covariance matrices  $\varSigma _Q$, $\varSigma _*$ are randomly generated as symmetric, non-diagonal matrices, constrained to have element values less than 3.
The context vectors are sampled uniformly at random from the interval $[0, 50]^d$. To assess the performance of Meta-TSLB, we benchmark it against Meta-TS and two variations of Thompson Sampling: OracleTS, which assumes knowledge of the instance prior $P^*$, and TS, which marginalizes out the meta-prior $Q$.

The first experiment is centered on normal linear bandits, with its setting grounded in Section 2. The outcomes   are   represented in Figure \ref{LB}.

The second experiment is set up as described in Subsection \ref{SecLBF}, focused on the linear bandits with finite potential instance priors. Here, we randomly generate $L=50$ distributions  $\left \{ P^{( j) }\right \} _{j= 1}^{50}$ as potential instance priors, where the mean  $\boldsymbol{\mu}^{(j)}$ of   distributions $P^{( j) }$ is obtained by randomly sampling from $[-1,1]^d$, and $\varSigma^{(j)}$ is  a randomly generated matrix with element values less than 3, $j=1,\cdots,50 $. The results of the second experiment   are illustrated in Figure \ref{LBF}.

The third experiment is about the  linear bandits with infinite arms, whose settings can be found in Subsection \ref{SecLBI}.
For the  polyhedron $\mathcal{B}(t)$ int round $t$, we randomly generate a matrix $A(t)\in\mathbb{R}^{5\times d}$ and a vector $B(t)\in\mathbb{R}^{5}$, subsequently set $\mathcal{B} (t)=\left\{ \boldsymbol{x}\in \mathbb{R} ^d|A\left( t \right) \boldsymbol{x}\leqslant B\left( t \right) \right\} $. The results are shown in Figure \ref{LBI}.

The fourth experiment delves into sequential linear bandits, with its setting outlined in Section \ref{SecSLB}. In this experiment, we postulate that each sequential bandit instance comprises $p=3$ distinct linear bandits: Bandit 1, Bandit 2, and Bandit 3.
Let
$
    \boldsymbol{b}\left( t \right) =\boldsymbol{b}_{1,A_{1}^{\left( t \right)}}\left( t \right) \circ \boldsymbol{b}_{2,A_{2}^{\left( t \right)}}\left( t \right) \circ \boldsymbol{b}_{p,A_{3}^{\left( t \right)}}\left( t \right) ,
$
where the symbol $\circ$ represents the Hadamard product (element-wise product).
We assign the number of arms for Bandits 1, 2, and 3 to be 20, 15, and 5, respectively. Presuppose that the initial arm pulled in round 0 is identical for all three bandits, specifically $A_1^{(0)}=A_2^{(0)}=A_3^{(0)}=1$. The outcomes of this experiment are illustrated in Figure \ref{TSSLB}.

Evidently, OracleTS attains the lowest possible regret. It can be observed that Meta-TSLB outperforms Meta-TS, because for task $s$, the prior  of Meta-TSLB is more likely closer to  $P^*$ compared to the prior  of Meta-TS.





\begin{figure}[t] 
    \centering 

    \begin{minipage}[t]{0.48\linewidth} 
        \centering
        \includegraphics[width=\linewidth]{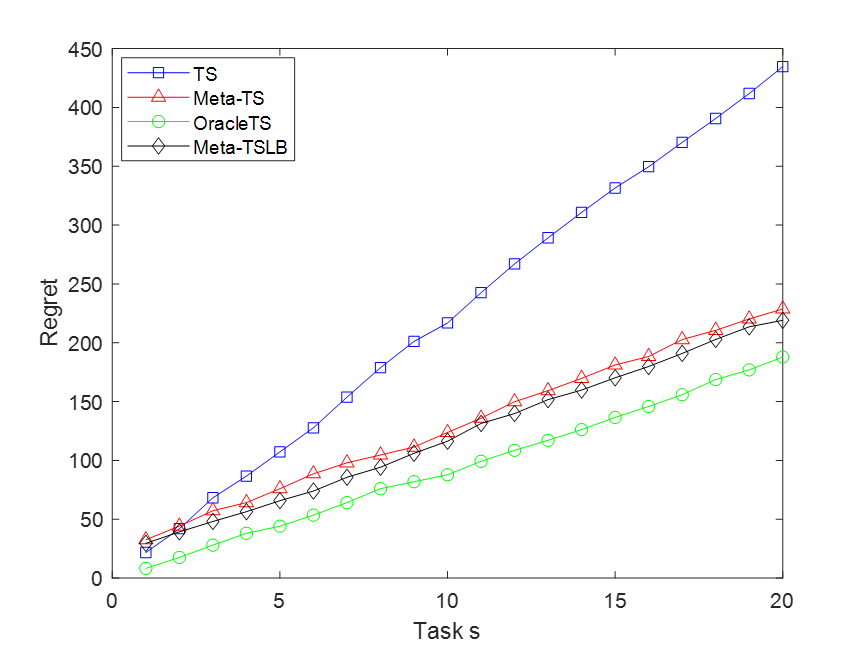} 
        \subcaption{ Linear bandits}
        \label{LB}
    \end{minipage}%
    \hfill 
    \begin{minipage}[t]{0.48\linewidth}
        \centering
        \includegraphics[width=\linewidth]{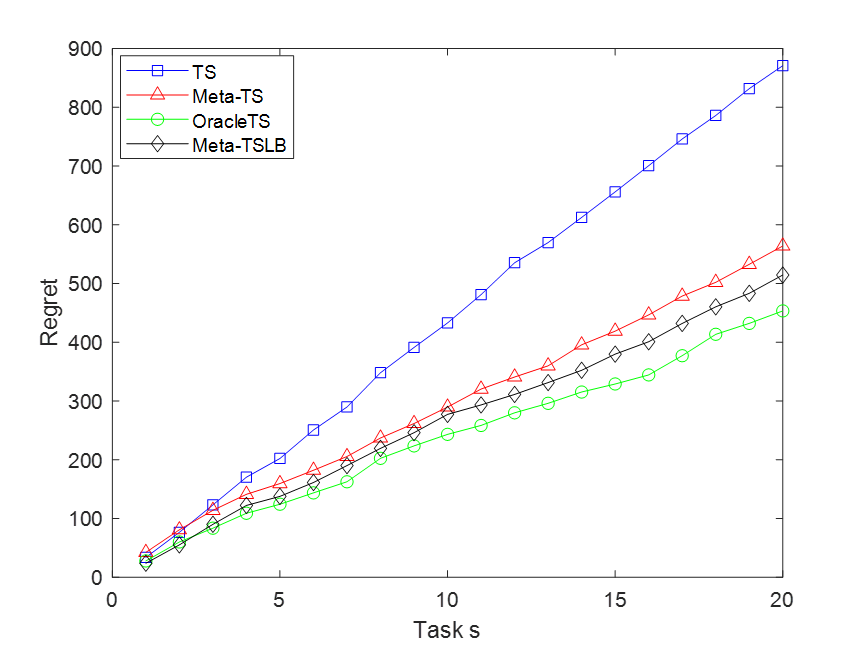} 
        \subcaption{Linear bandits with finite potential instance priors}
        \label{LBF}
    \end{minipage}

    \vskip\baselineskip

    \begin{minipage}[t]{0.48\linewidth}
        \centering
        \includegraphics[width=\linewidth]{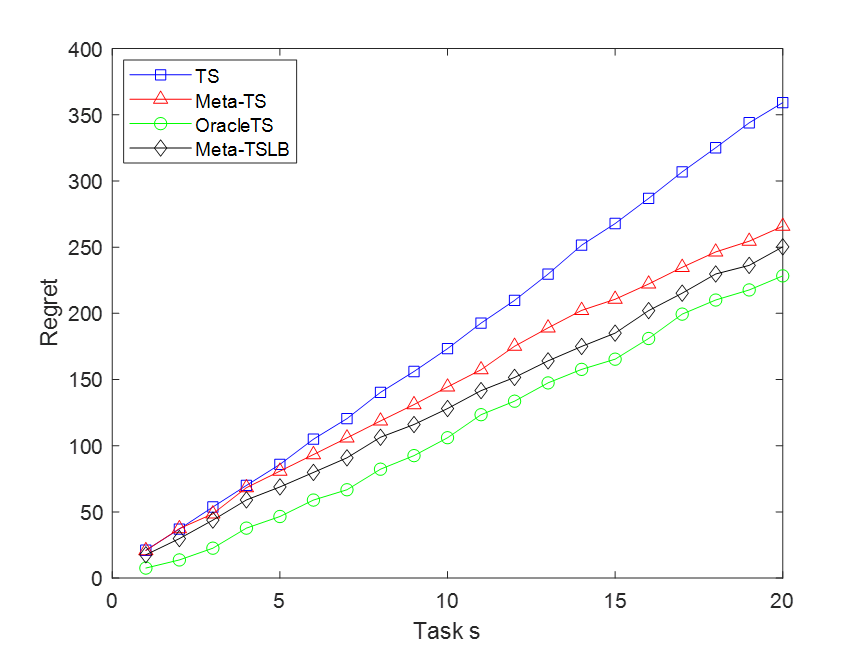} 
        \subcaption{Linear bandits with infinite arms}
        \label{LBI}
    \end{minipage}%
    \hfill
    \begin{minipage}[t]{0.48\linewidth}
        \centering
        \includegraphics[width=\linewidth]{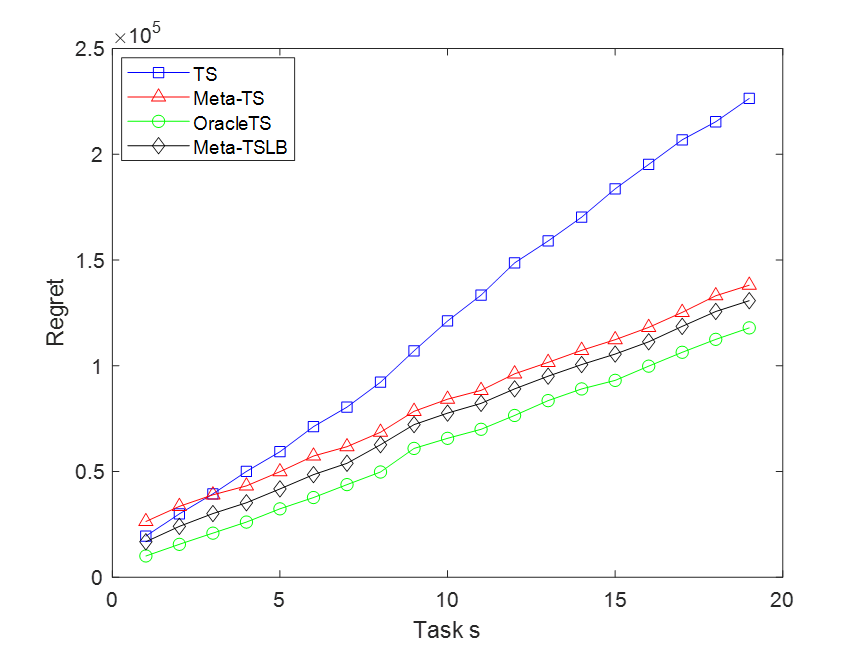} 
        \subcaption{Sequential linear bandits}
        \label{TSSLB}
    \end{minipage}

    \caption{Comparison of Meta-TSLB with Meta-TS,  OracleTS and TS in different settings}
    \label{fig2}
\end{figure}


The final experiment aims to validate the generalization ability of Meta-TSLB. To facilitate this assessment, we employ the ordinary linear bandits detailed in Section 2 for testing.  $\mathcal{M} \left( \boldsymbol{\mu }_* \right)$ is the set of $m$ tasks, sampled from the instance prior $P_*=\mathcal{N}(\boldsymbol{\mu }_*,v^2 \varSigma _* )$ .
By applying Meta-TSLB (Meta-TS) to $\mathcal{M} \left( \boldsymbol{\mu }_* \right)$, we derive the meta-posterior $Q^{\prime}$. This meta-posterior is then leveraged as the meta-prior for a fresh set of tasks $    \mathcal{M} \left( \boldsymbol{\mu }_*+\boldsymbol{\epsilon } \right)$, where $\boldsymbol{\epsilon }$ is a randomly generated vector. This approach allows us to evaluate the generalization  ability of our algorithms under slight variations in the task distribution.


The results of experiments conducted with varying $\left\| \boldsymbol{\epsilon } \right\|$ values of 0, 1, 3, and 6 are presented in Figure \ref{fig3}. A notable observation is that Meta-TSLB and Meta-TS exhibit comparable generalization abilities. Specifically, when $\left\| \boldsymbol{\epsilon } \right\|$ is small, both algorithms display remarkable generalization abilities, with their performance rivaling that of OracleTS at $\left\| \boldsymbol{\epsilon } \right\|=0$. This indicates that the meta-posterior $Q^{\prime}$ after $m$ iterations is very close to the instance prior $P^*$.
Furthermore, as $\left\| \boldsymbol{\epsilon } \right\|$ increases, both Meta-TSLB and Meta-TS maintain good performance for subsequent tasks after learning from multiple tasks, meaning that their meta-posteriors approaching $ \mathcal{N}(\boldsymbol{\mu }_*+\boldsymbol{\epsilon },v^2 \varSigma _* )$ after learning from multiple tasks.
Denote the Bayes regret bound with meta-prior $Q$  and meta-prior $Q^{\prime}$ by $R_{Q}$ and $R_{Q^{\prime}}$. According to Lemma \ref{theo3G},  for the task $s$ in  $    \mathcal{M} \left( \boldsymbol{\mu }_*+\boldsymbol{\epsilon } \right)$, we have
$
    \left\| \boldsymbol{\mu }_{Q,s}-\left( \boldsymbol{\mu }_*+\boldsymbol{\epsilon } \right) \right\|  \leqslant \left\| \boldsymbol{\epsilon } \right\| +v\sqrt{d\left( \left( \lambda _{\max}(\varSigma _Q ) -\frac{2}{175\lambda _{\min}} \right) \left( \frac{7}{8} \right) ^{m-1}+\frac{2}{175\lambda _{\min}} \right) \log \left( \frac{2d}{\delta} \right)}.
$
By the proof of Theorem \ref{theo_mainG}, we get
$	R_Q-R_{Q^{\prime}}\propto  v\left( 4\log \left( m \right) -m\left( \frac{7}{8} \right) ^{\frac{m}{2}} \right) u_2\left( \delta \right)-m\left\| \boldsymbol{\epsilon } \right\| .$
Since  $4\log \left( m \right) -m\left( \frac{7}{8} \right) ^{\frac{m}{2}} >0$ for $m\geqslant 2$, if
$
    \left\| \boldsymbol{\epsilon } \right\| \leqslant v\left( 4\frac{\log \left( m \right)}{m}-\left( \frac{7}{8} \right) ^{\frac{m}{2}} \right) u_2\left( \delta \right),$
 we conclude that $R_Q>R_{Q^{\prime}}$.


\begin{figure}[t] 
    \centering 

    \begin{minipage}[t]{0.48\linewidth} 
        \centering
        \includegraphics[width=\linewidth]{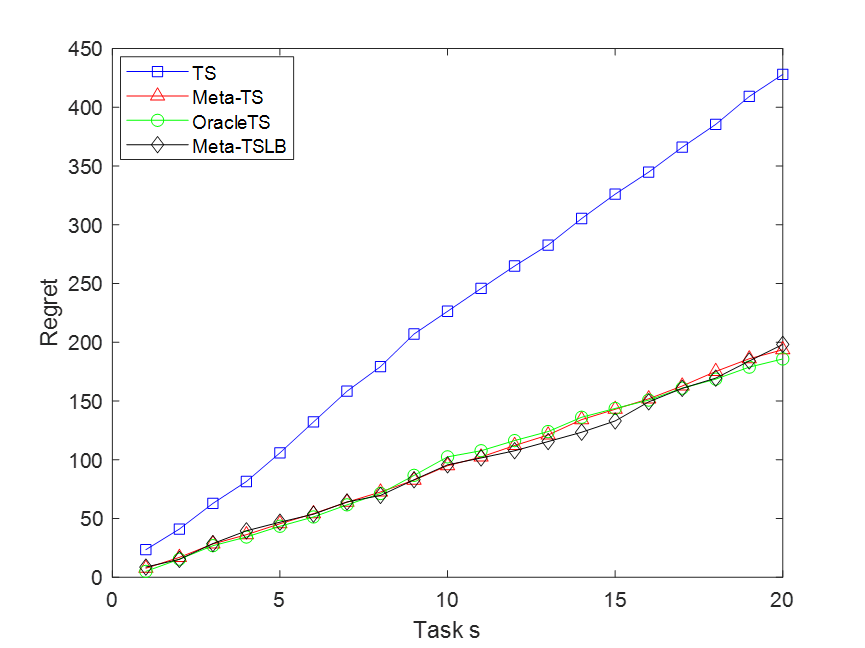} 
        \subcaption{$\left\| \boldsymbol{\epsilon }  \right\| =0$}
        \label{ eee}
    \end{minipage}%
    \hfill 
    \begin{minipage}[t]{0.48\linewidth}
        \centering
        \includegraphics[width=\linewidth]{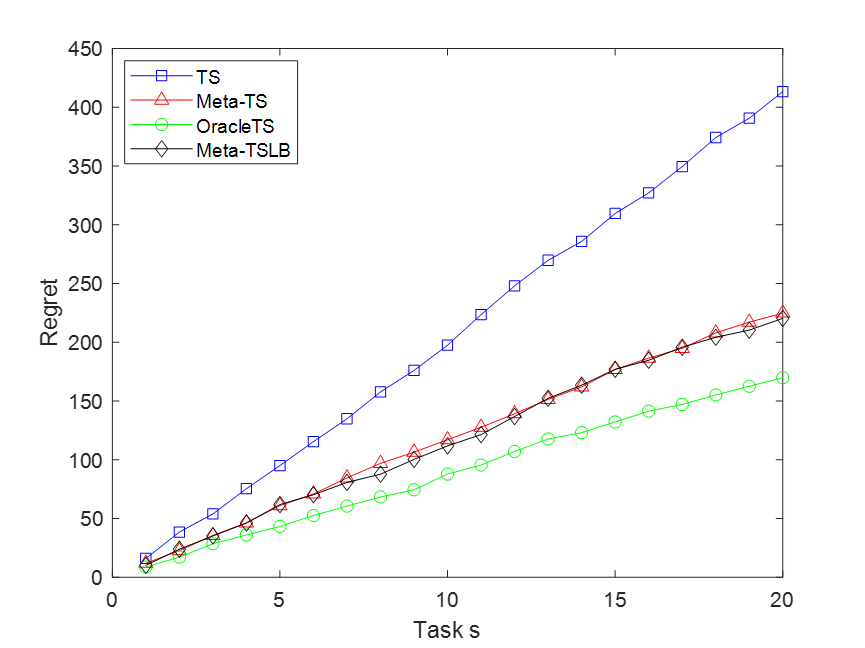} 
        \subcaption{$\left\| \boldsymbol{\epsilon } \right\| =1$}
        \label{fig:top-right}
    \end{minipage}

    \vskip\baselineskip

    \begin{minipage}[t]{0.48\linewidth}
        \centering
        \includegraphics[width=\linewidth]{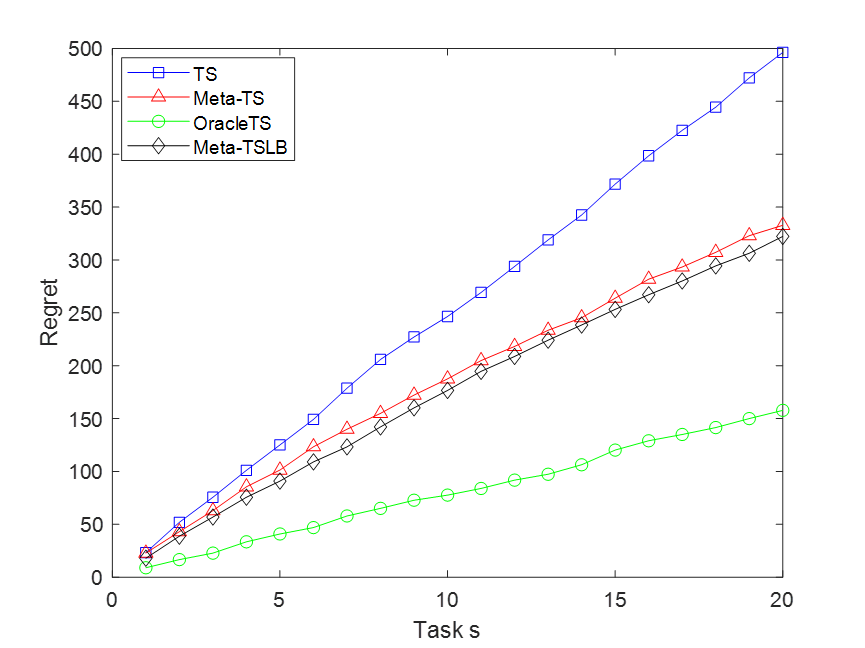} 
        \subcaption{$\left\| \boldsymbol{\epsilon } \right\| =3$}
        \label{fig:bottom-left}
    \end{minipage}%
    \hfill
    \begin{minipage}[t]{0.48\linewidth}
        \centering
        \includegraphics[width=\linewidth]{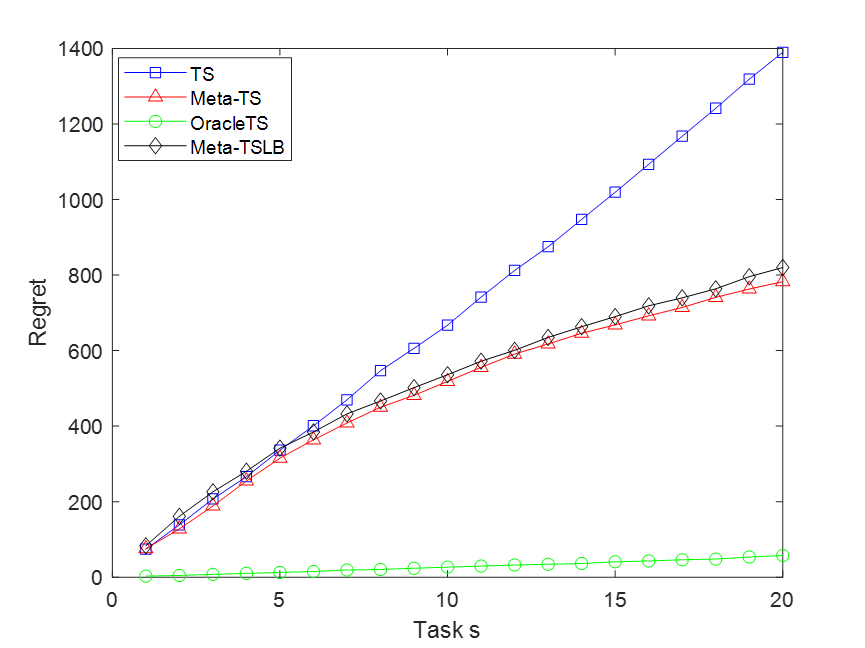} 
        \subcaption{$\left\| \boldsymbol{\epsilon } \right\| =6$}
        \label{fig:bottom-right}
    \end{minipage}

    \caption{ The generalization ability testing experiment of Meta-TSLB and  Meta-TS  }
    \label{fig3}
\end{figure}

\section{Conclusion}
This paper explores the extension of TS for linear bandits under a meta-learning framework. We introduce Meta-TSLB algorithm, which leverages a meta-prior and meta-posterior distributions to model the  uncertainty in the instance prior, allowing the learning agent to adaptively update its prior based on sequential interactions with bandit instances. 
Theoretical analyses provide an $ O\left( \left( m+\log \left( m \right) \right) \sqrt{n\log \left( n \right)} \right)$  bound on  Bayes regret, indicating that as the agent learns about the unknown prior, its performance improves.
We also complemente  the Bayes regret bound of Meta-TS applied to linear bandits, and the bound of Meta-TSLB is smaller because for all tasks, the prior of Meta-TSLB is closer to the instance prior compared to that of Meta-TS.
Extensive experiments on various linear bandit settings, including finite potential instance priors, infinite arms and sequential linear bandits, demonstrate the effectiveness of Meta-TSLB, showing that it outperforms Meta-TS \cite{metaTS} and TS  with incorrect priors and approaches the performance of TS with known priors (OracleTS).
Furthermore, we   demonstrate and analyze the    generalization  ability of Meta-TSLB, highlighting its potential to adeptly adapt to novel and unseen linear bandit tasks.

\newlength{\originaltextheight}  
\setlength{\originaltextheight}{\textheight}

\addtolength{\textheight}{-16cm}   

\bibliographystyle{IEEEtran}

\bibliography{IEEEabrv, ref}

\begin{thebibliography}{10}
\providecommand{\url}[1]{#1}
\csname url@samestyle\endcsname
\providecommand{\newblock}{\relax}
\providecommand{\bibinfo}[2]{#2}
\providecommand{\BIBentrySTDinterwordspacing}{\spaceskip=0pt\relax}
\providecommand{\BIBentryALTinterwordstretchfactor}{4}
\providecommand{\BIBentryALTinterwordspacing}{\spaceskip=\fontdimen2\font plus
\BIBentryALTinterwordstretchfactor\fontdimen3\font minus
  \fontdimen4\font\relax}
\providecommand{\BIBforeignlanguage}[2]{{%
\expandafter\ifx\csname l@#1\endcsname\relax
\typeout{** WARNING: IEEEtran.bst: No hyphenation pattern has been}%
\typeout{** loaded for the language `#1'. Using the pattern for}%
\typeout{** the default language instead.}%
\else
\language=\csname l@#1\endcsname
\fi
#2}}
\providecommand{\BIBdecl}{\relax}
\BIBdecl

\bibitem{02}
P.~Auer, ``Using confidence bounds for exploitation-exploration trade-offs,''
  \emph{Journal of Machine Learning Research}, vol.~3, no. Nov, pp. 397--422,
  2002.

\bibitem{L03}
S.~Filippi, O.~Cappe, A.~Garivier, and C.~Szepesvári, ``Parametric bandits:
  The generalized linear case,'' \emph{Advances in Neural Information
  Processing Systems}, vol.~23, pp. 586--594, 2010.

\bibitem{CB}
W.~Chu, L.~Li, L.~Reyzin, and R.~Schapire, ``Contextual bandits with linear
  payoff functions,'' in \emph{Proceedings of the Fourteenth International
  Conference on Artificial Intelligence and Statistics}.\hskip 1em plus 0.5em
  minus 0.4em\relax JMLR Workshop and Conference Proceedings, 2011, pp.
  208--214.

\bibitem{L04}
Y.~Abbasi-Yadkori, D.~P\'{a}l, and C.~Szepesv\'{a}ri, ``Improved algorithms for
  linear stochastic bandits,'' in \emph{Proceedings of the 24th International
  Conference on Neural Information Processing Systems}, ser. NIPS'11.\hskip 1em
  plus 0.5em minus 0.4em\relax Red Hook, NY, USA: Curran Associates Inc., 2011,
  pp. 2312--2320.

\bibitem{metaTS}
B.~Kveton, M.~Konobeev, M.~Zaheer, C.-w. Hsu, M.~Mladenov, C.~Boutilier, and
  C.~Szepesvari, ``Meta-thompson sampling,'' in \emph{International Conference
  on Machine Learning}.\hskip 1em plus 0.5em minus 0.4em\relax PMLR, 2021, pp.
  5884--5893.

\bibitem{ML1}
J.~Baxter, \emph{Theoretical models of learning to learn}.\hskip 1em plus 0.5em
  minus 0.4em\relax USA: Kluwer Academic Publishers, 1998, p. 71–94.

\bibitem{ML2}
------, ``A model of inductive bias learning,'' \emph{J. Artif. Int. Res.},
  vol.~12, no.~1, p. 149–198, mar 2000.

\bibitem{ML3}
S.~Thrun, \emph{Explanation-Based Neural Network Learning}.\hskip 1em plus
  0.5em minus 0.4em\relax Boston, MA: Springer US, 1996, pp. 19--48.

\bibitem{ML4}
------, \emph{Lifelong learning algorithms}.\hskip 1em plus 0.5em minus
  0.4em\relax USA: Kluwer Academic Publishers, 1998, p. 181–209.

\bibitem{azizi}
\BIBentryALTinterwordspacing
M.~Azizi, B.~Kveton, M.~Ghavamzadeh, and S.~Katariya, ``Meta-learning for
  simple regret minimization,'' 2023. [Online]. Available:
  \url{https://arxiv.org/abs/2202.12888}
\BIBentrySTDinterwordspacing

\bibitem{TS1}
W.~R. Thompson, ``On the likelihood that one unknown probability exceeds
  another in view of the evidence of two samples,'' \emph{Biometrika}, pp.
  285--294, 1933.

\bibitem{TS2}
O.~Chapelle and L.~Li, ``An empirical evaluation of thompson sampling,'' in
  \emph{Neural Information Processing Systems}, 2011.

\bibitem{TS3}
S.~Agrawal and N.~Goyal, ``Analysis of thompson sampling for the multi-armed
  bandit problem,'' \emph{Journal of Machine Learning Research}, vol.~23,
  no.~4, pp. 357--364, 2011.

\bibitem{TS4}
D.~Russo, B.~Van~Roy, A.~Kazerouni, I.~Osband, and Z.~Wen, ``A tutorial on
  thompson sampling,'' \emph{Foundations and Trends in Machine Learning},
  vol.~11, no.~1, pp. 1--96, 2017.

\bibitem{TSCB}
S.~Agrawal and N.~Goyal, ``Thompson sampling for contextual bandits with linear
  payoffs,'' in \emph{Proceedings of the 30th International Conference on
  International Conference on Machine Learning - Volume 28}, ser.
  ICML'13.\hskip 1em plus 0.5em minus 0.4em\relax JMLR.org, 2013, p.
  1220–1228.

\end{thebibliography}
\setlength{\textheight}{\originaltextheight}  
\onecolumn
\section*{APPENDIX}

\subsection{Preliminary Lemmas}
We first define some  symbols for use in all subsequent proofs. Let    $ \hat{\theta}_i\left( t \right) =\boldsymbol{b }_i\left( t \right) ^T\hat{\boldsymbol{\mu }}\left( t \right) $, $\tilde{\theta}_i\left( t \right) =\boldsymbol{b }_i\left( t \right) ^T\tilde{\boldsymbol{\mu }}\left( t \right) $  and $s_{i}\left( t \right)^{2} =\boldsymbol{b }_i\left( t \right) ^TB\left( t \right) ^{-1}\boldsymbol{b }_i\left( t \right) $, where $\hat{\boldsymbol{\mu }}(t)$, $\tilde{\boldsymbol{\mu }}(t)$ and $B\left( t \right)$ are defined in TS. Then, we obtain 
$\tilde{\theta}_i\left( t \right) \sim \mathcal{N} \left( \hat{\theta}_i\left( t \right) ,v^2s_{i}\left( t \right)^{2} \right)$,
that is, $ vs_{i}\left( t \right)$  the standard deviation of random variable $\tilde{\theta}_i\left( t \right) $.

\begin{lemma}\label{lem10}
Suppose that $\vartheta $ is the constant    in Assumption  \ref{con1}, then $
\sum_{t=1}^n{\sqrt{\frac{1}{\lambda _{\min}\left( B\left( t \right) \right)}}}<\sqrt{\frac{1}{\lambda _{\min}\left( B\left( 1 \right) \right)}}+\sqrt{\frac{n-1}{\vartheta}}.
$
\end{lemma}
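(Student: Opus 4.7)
\medskip
\noindent\textbf{Proof proposal for Lemma \ref{lem10}.} The plan is to split off the $t=1$ term (where Assumption \ref{con1} yields only the trivial bound $\lambda_{\min}(B(1))\geqslant 0$) and to control the tail $t=2,\dots,n$ by the quantitative bound of Assumption \ref{con1} combined with a standard integral estimate of $\sum 1/\sqrt{k}$.

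\medskip
\noindent\textbf{Step 1 (Isolate $t=1$).} Write
\[
\sum_{t=1}^{n}\sqrt{\frac{1}{\lambda_{\min}(B(t))}}
\;=\;\sqrt{\frac{1}{\lambda_{\min}(B(1))}}\;+\;\sum_{t=2}^{n}\sqrt{\frac{1}{\lambda_{\min}(B(t))}}.
\]
The first summand already matches the first term on the right-hand side of the claim, so the work reduces to bounding the second sum by $\sqrt{(n-1)/\vartheta}$.

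\medskip
\noindent\textbf{Step 2 (Apply Assumption \ref{con1}).} For every $t\geqslant 2$, Assumption \ref{con1} gives $\lambda_{\min}(B(t))\geqslant 4\vartheta(t-1)>0$, so
\[
\sqrt{\frac{1}{\lambda_{\min}(B(t))}}\;\leqslant\;\frac{1}{2\sqrt{\vartheta(t-1)}}.
\]
Summing and reindexing $k=t-1$,
\[
\sum_{t=2}^{n}\sqrt{\frac{1}{\lambda_{\min}(B(t))}}\;\leqslant\;\frac{1}{2\sqrt{\vartheta}}\sum_{k=1}^{n-1}\frac{1}{\sqrt{k}}.
\]

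\medskip
\noindent\textbf{Step 3 (Integral comparison).} Since $x\mapsto 1/\sqrt{x}$ is decreasing on $(0,\infty)$, for each $k\geqslant 1$ the strict inequality $1/\sqrt{k}<1/\sqrt{x}$ on $(k-1,k)$ yields $1/\sqrt{k}\leqslant \int_{k-1}^{k} x^{-1/2}\,dx$, and summing,
\[
\sum_{k=1}^{n-1}\frac{1}{\sqrt{k}}\;\leqslant\;\int_{0}^{n-1}\frac{dx}{\sqrt{x}}\;=\;2\sqrt{n-1},
\]
with strict inequality as soon as $n\geqslant 3$ (for $n\in\{1,2\}$ the claim is immediate). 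Plugging this into Step 2 gives $\sum_{t=2}^{n}\sqrt{1/\lambda_{\min}(B(t))}\leqslant\sqrt{(n-1)/\vartheta}$, and combining with Step 1 concludes the proof of the stated bound.

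\medskip
\noindent\textbf{Expected obstacle.} There is no real obstacle: the entire argument hinges on Assumption \ref{con1}, which linearizes $\lambda_{\min}(B(t))$ in $t-1$ from below, together with the elementary bound $\sum_{k=1}^{n-1}k^{-1/2}\leqslant 2\sqrt{n-1}$. The only subtlety worth noting is that the $t=1$ term must be treated separately because Assumption \ref{con1} degenerates there; once this is done, the remaining argument is a one-line integral estimate.
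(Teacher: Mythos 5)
Your proof is correct and follows essentially the same route as the paper: isolate the $t=1$ term, invoke Assumption \ref{con1} to get $\sqrt{1/\lambda_{\min}(B(t))}\leqslant \frac{1}{2\sqrt{\vartheta(t-1)}}$ for $t\geqslant 2$, and then bound the resulting sum by $\sqrt{(n-1)/\vartheta}$. Your integral comparison $\frac{1}{\sqrt{k}}\leqslant\int_{k-1}^{k}x^{-1/2}\,dx=2(\sqrt{k}-\sqrt{k-1})$ is just the paper's conjugate-and-telescope identity $\frac{1}{2\sqrt{k}}<\sqrt{k}-\sqrt{k-1}$ in disguise, so the two arguments coincide term by term.
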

\begin{proof}
    For $t\geqslant 2$, we have $
\sqrt{\frac{1}{\lambda _{\min}\left( B\left( t \right) \right)}}\leqslant \frac{1}{2\sqrt{\vartheta \left( t-1 \right)}}< \frac{1}{\sqrt{\vartheta}}\left( \frac{1}{\sqrt{t-1}+\sqrt{t-2}} \right) =\frac{1}{\sqrt{\vartheta}}\left( \sqrt{t-1}-\sqrt{t-2} \right) .
$
    Thus, 
    $
\sum_{t=1}^n{\sqrt{\frac{1}{\lambda _{\min}\left( B\left( t \right) \right)}}}<\sqrt{\frac{1}{\lambda _{\min}\left( B\left( 1 \right) \right)}}+\frac{1}{\sqrt{\vartheta}}\sum_{t=2}^n{\left( \sqrt{t-1}-\sqrt{t-2} \right)}=\sqrt{\frac{1}{\lambda _{\min}\left( B\left( 1 \right) \right)}}+\sqrt{\frac{n-1}{\vartheta}}.
$
\end{proof}

 According to  Rayleigh theorem and \eqref{eq18},  we obatin that 
\begin{equation}\label{eq17}
        s_i\left( t \right)^2 \leqslant {\frac{1}{\lambda _{\min}\left( B\left( t \right) \right)}}\left\| \boldsymbol{b}_i\left( t \right) \right\|^2 \leqslant {\frac{1}{\lambda _{\min}\left( B\left( t \right) \right)}}\leqslant {\frac{1}{\lambda _{\min}  (B(1)) }},
\end{equation}
 and under Assumption \ref{con1},
\begin{equation}\label{eq16}
\sum_{t=1}^n{s_i\left( t \right)}\leqslant \sum_{t=1}^n{\sqrt{\frac{1}{\lambda _{\min}\left( B\left( t \right) \right)}}}<{\frac{1}{\lambda _{\min}  (B(1)) }}+\sqrt{\frac{n-1}{\vartheta}}.
\end{equation}

\subsection{Proof of Lemma \ref{Qs}}
\begin{proof}
 Let $\varPhi \left( \boldsymbol{x};\boldsymbol{\theta },\varSigma \right)$ be the probability density function of multivariate Gaussian distribution $\mathcal{N} \left( \boldsymbol{\theta },\varSigma \right)$ and $\varphi \left( x;\theta ,v ^2 \right) $  be the probability density function of Gaussian distribution   $\mathcal{N} \left( \theta ,v^2 \right) $.
    According to \cite{metaTS}, once the task $s$ is complete, it updates the meta-posterior in a standard Bayesian fashion
    $
        Q_{s+1}\left( \bar{\boldsymbol{\mu }} \right)
        \propto f\left( \bar{\boldsymbol{\mu }} \right) Q_s\left( \bar{\boldsymbol{\mu }} \right),
    $ where
    \begin{equation}\label{eq10}
        f\left( \bar{\boldsymbol{\mu }} \right) =\mathbb{P} \left( H_s|\boldsymbol{\mu } _*=\bar{\boldsymbol{\mu }} \right) =\int_{\boldsymbol{\mu }}{\mathbb{P} \left( H_s|\boldsymbol{\mu } _s=\boldsymbol{\mu } \right) \mathbb{P} \left( \boldsymbol{\mu } _s=\boldsymbol{\mu } |\boldsymbol{\mu } _*=\bar{\boldsymbol{\mu }} \right) d\boldsymbol{\mu }}.
    \end{equation}

    We first calculate $f\left( \bar{\boldsymbol{\mu }} \right)$.
    By \eqref{eq10}, we have
    $$
        \begin{aligned}
            f\left( \bar{\boldsymbol{\mu}} \right)
             & =\int_{\boldsymbol{\mu }}{\prod_{t=1}^n{\varphi \left( r_{A_t}\left( t \right) ;\boldsymbol{b}_{A_t}\left( t \right) ^T\boldsymbol{\mu },v^2 \right)}\cdot \varPhi \left( \boldsymbol{\mu };\bar{\boldsymbol{\mu}},v^2\varSigma _* \right) d\boldsymbol{\mu }}
            \\
             & \propto \int_{\boldsymbol{\mu }}{\exp \left\{ -\frac{1}{2v^2}\left[ \sum_{t=1}^n{\left( r_{A_t}\left( t \right) -\boldsymbol{b}_{A_t}\left( t \right) ^T\boldsymbol{\mu } \right) ^2}+\left( \boldsymbol{\mu }-\bar{\boldsymbol{\mu}} \right) ^T\varSigma _{*}^{-1}\left( \boldsymbol{\mu }-\bar{\boldsymbol{\mu}} \right) \right] \right\} d\boldsymbol{\mu }}
            \\
             & \propto \exp \left\{ -\frac{1}{2v^2}\left[ \bar{\boldsymbol{\mu}}^T\varSigma _{*}^{-1}\bar{\boldsymbol{\mu}} \right] \right\}
            \\
             & \quad \cdot \int_{\boldsymbol{\mu }}{\exp \left\{ -\frac{1}{2v^2}\left[ -2\left( \sum_{t=1}^n{r_{A_t}\left( t \right) \boldsymbol{b}_{A_t}\left( t \right) ^T}+\bar{\boldsymbol{\mu}}^T\varSigma _{*}^{-1} \right) \boldsymbol{\mu }+\boldsymbol{\mu }^T\left( \sum_{t=1}^n{\boldsymbol{b}_{A_t}\left( t \right) \boldsymbol{b}_{A_t}\left( t \right) ^T}+\varSigma _{*}^{-1} \right) \boldsymbol{\mu } \right] \right\} d\boldsymbol{\mu }}.
        \end{aligned}
    $$
    To simplify the symbol, let
    $$
        G=\sum_{t=1}^n{\boldsymbol{b }_{A_t}\left( t \right) \boldsymbol{b }_{A_t}\left( t \right) ^T}+\varSigma _{*}^{-1},\  \boldsymbol{\xi} =G^{-1}\left( \sum_{t=1}^n{r_{A_t}\left( t \right) \boldsymbol{b }_{A_t}\left( t \right) ^T}+\bar{\boldsymbol{\mu }}^T\varSigma _{*}^{-1} \right) ^T.
    $$
    Then, we get
    $$
        \begin{aligned}
            f\left( \bar{\boldsymbol{\mu}} \right) & \propto \exp \left\{ -\frac{1}{2v^2}\left[ \bar{\boldsymbol{\mu}}^T\varSigma _{*}^{-1}\bar{\boldsymbol{\mu}} \right] \right\} \cdot \int_{\boldsymbol{\mu }}{\exp \left\{ -\frac{1}{2v^2}\left[ -2\boldsymbol{\xi }^TG\boldsymbol{\mu }+\boldsymbol{\mu }^TG\boldsymbol{\mu } \right] \right\} d\boldsymbol{\mu }}
            \\
                                                   & =\exp \left\{ -\frac{1}{2v^2}\left[ \bar{\boldsymbol{\mu}}^T\varSigma _{*}^{-1}\bar{\boldsymbol{\mu}}-\boldsymbol{\xi }^TG\boldsymbol{\xi } \right] \right\} \cdot \int_{\boldsymbol{\mu }}{\exp \left\{ -\frac{1}{2v^2}\left( \boldsymbol{\mu }-\boldsymbol{\xi } \right) ^TG\left( \boldsymbol{\mu }-\boldsymbol{\xi } \right) \right\} d\boldsymbol{\mu }}
            \\
                                                   & \propto \exp \left\{ -\frac{1}{2v^2}\left[ \bar{\boldsymbol{\mu}}^T\varSigma _{*}^{-1}\bar{\boldsymbol{\mu}}-\boldsymbol{\xi }^TG\boldsymbol{\xi } \right] \right\} .
        \end{aligned}
    $$

    Next, we derive $Q_{s+1}$ according to  $Q_{s+1}\left( \bar{\boldsymbol{\mu }} \right)  \propto f\left( \bar{\boldsymbol{\mu }} \right) Q_s\left( \bar{\boldsymbol{\mu }} \right)$, that is,
    $$
        \begin{aligned}
            Q_{s+1}\left( \bar{\boldsymbol{\mu }} \right) & \propto f\left( \bar{\boldsymbol{\mu }} \right) Q_s\left( \bar{\boldsymbol{\mu }} \right)
            \\
                                                          & \propto \exp \left\{ -\frac{1}{2v^2}\left[ \bar{\boldsymbol{\mu }}^T\varSigma _{*}^{-1}\bar{\boldsymbol{\mu }}-\boldsymbol{\xi} ^TG\boldsymbol{\xi} \right] \right\} \cdot \exp \left\{ -\frac{1}{2v^2}\left[ \left( \bar{\boldsymbol{\mu }}-\boldsymbol{\mu } _{Q,s} \right) ^T\varSigma _{Q,s}^{-1}\left( \bar{\boldsymbol{\mu }}-\boldsymbol{\mu } _{Q,s} \right) \right] \right\}
            \\
                                                          & \propto \exp \left\{ -\frac{1}{2v^2}\left[ \bar{\boldsymbol{\mu }}^T\varSigma _{*}^{-1}\bar{\boldsymbol{\mu }}-\boldsymbol{\xi} ^TG\boldsymbol{\xi} +\bar{\boldsymbol{\mu }}^T\varSigma _{Q,s}^{-1}\bar{\boldsymbol{\mu }}-2\bar{\boldsymbol{\mu }}^T\varSigma _{Q,s}^{-1}\boldsymbol{\mu } _{Q,s} \right] \right\}.
        \end{aligned}
    $$
    Denote $Y=\sum_{i=t}^n{r_{A_t}\left( t \right) \boldsymbol{b }_{A_t}\left( t \right)}$. Note that
    $$
        \begin{aligned}
            \boldsymbol{\xi} ^TG\boldsymbol{\xi} & =\left( Y^T+\bar{\boldsymbol{\mu }}^T\varSigma _{*}^{-1} \right) G^{-1}\left( Y^T+\bar{\boldsymbol{\mu }}^T\varSigma _{*}^{-1} \right) ^T
            \\
                                                 & =Y^TG^{-1}Y+2Y^TG^{-1}\varSigma _{*}^{-1}\bar{\boldsymbol{\mu }}+\bar{\boldsymbol{\mu }}^T\varSigma _{*}^{-1}G^{-1}\varSigma _{*}^{-1}\bar{\boldsymbol{\mu }} .
        \end{aligned}
    $$
    Then,
    $$
        \begin{aligned}
                    & Q_{s+1}\left( \bar{\boldsymbol{\mu }} \right)                                                                                                                                                                                                                                                                                       \\ \propto& \exp \left\{ -\frac{1}{2v^2}\left[ \bar{\boldsymbol{\mu }}^T\varSigma _{*}^{-1}\bar{\boldsymbol{\mu }}+\bar{\boldsymbol{\mu }}^T\varSigma _{Q,s}^{-1}\bar{\boldsymbol{\mu }}-2\bar{\boldsymbol{\mu }}^T\varSigma _{Q,s}^{-1}\boldsymbol{\mu } _{Q,s}-2Y^TG^{-1}\varSigma _{*}^{-1}\bar{\boldsymbol{\mu }}-\bar{\boldsymbol{\mu }}^T\varSigma _{*}^{-1}G^{-1}\varSigma _{*}^{-1}\bar{\boldsymbol{\mu }} \right] \right\}
            \\
            \propto & \exp \left\{ -\frac{1}{2v^2}\left[ \bar{\boldsymbol{\mu }}^T\left( \varSigma _{*}^{-1}+\varSigma _{Q,s}^{-1}-\varSigma _{*}^{-1}G^{-1}\varSigma _{*}^{-1} \right) \bar{\boldsymbol{\mu }}-2\left( {\boldsymbol{\mu } _{Q,s}}^T\varSigma _{Q,s}^{-1}+Y^TG^{-1}\varSigma _{*}^{-1} \right) \bar{\boldsymbol{\mu }} \right] \right\} .
        \end{aligned}
    $$
    Let
    $$
        W=\varSigma _{*}^{-1}+\varSigma _{Q,s}^{-1}-\varSigma _{*}^{-1}G^{-1}\varSigma _{*}^{-1},\  \boldsymbol{\eta } =W^{-1}\left( {\boldsymbol{\mu } _{Q,s}}^T\varSigma _{Q,s}^{-1}+Y^TG^{-1}\varSigma _{*}^{-1} \right) ^T.
    $$
    Since
    $$
        W=\varSigma _{Q,s}^{-1}+\varSigma _{*}^{-1}\left( \varSigma _*-G^{-1} \right) \varSigma _{*}^{-1},
    $$
    and $\varSigma _*-G^{-1}$ is positive    semi-definite matrix,  we know that $W$ is positive    definite matrix.

    Furthermore,
    $$
        \begin{aligned}
            Q_{s+1}\left( \bar{\boldsymbol{\mu }} \right) & \propto \exp \left\{ -\frac{1}{2v^2}\left[ \bar{\boldsymbol{\mu }}^TW\bar{\boldsymbol{\mu }}-2\boldsymbol{\eta } ^TW\bar{\boldsymbol{\mu }} \right] \right\}
            \\
                                                          & \propto \exp \left\{ -\frac{1}{2v^2}\left[ \bar{\boldsymbol{\mu }}^TW\bar{\boldsymbol{\mu }}-2\boldsymbol{\eta } ^TW\bar{\boldsymbol{\mu }}+\boldsymbol{\eta } ^TW\boldsymbol{\eta } \right] \right\}
            \\
                                                          & \propto \varPhi \left( \bar{\boldsymbol{\mu}};\boldsymbol{\eta },v^2W^{-1} \right).
        \end{aligned}
    $$
    Note that
    $$
        G^{-1}\varSigma _{*}^{-1}=\left( \varSigma _*\sum_{t=1}^n{\boldsymbol{b}_{A_t}\left( t \right) \boldsymbol{b}_{A_t}\left( t \right) ^T}+I_d \right) ^{-1},\quad
        \varSigma _{*}^{-1}G^{-1}\varSigma _{*}^{-1}=\left[ \varSigma _*\left( \sum_{t=1}^n{\boldsymbol{b}_{A_t}\left( t \right) \boldsymbol{b}_{A_t}\left( t \right) ^T} \right) \varSigma _*+\varSigma _* \right] ^{-1},
    $$
    and we obtain
    $$
        \varSigma _{Q,s+1}=W^{-1}=\left[ \varSigma _{Q,s}^{-1}+\varSigma _{*}^{-1}-\left( \varSigma _*\left( \sum_{t=1}^n{\boldsymbol{b }_{A_t}\left( t \right) \boldsymbol{b }_{A_t}\left( t \right) ^T} \right) \varSigma _*+\varSigma _* \right) ^{-1} \right] ^{-1},
    $$
    $$
        \boldsymbol{\mu } _{Q,s+1}=\boldsymbol{\eta } =\varSigma _{Q,s+1}\left[ {\boldsymbol{\mu } _{Q,s}}^T\varSigma _{Q,s}^{-1}+\left( \sum_{t=1}^n{r_{A_t}\left( t \right) \boldsymbol{b }_{A_t}\left( t \right)} \right) ^T\left( \varSigma _*\sum_{t=1}^n{\boldsymbol{b }_{A_t}\left( t \right) \boldsymbol{b }_{A_t}\left( t \right) ^T}+I_d \right) ^{-1} \right] ^T.
    $$

\end{proof}

\subsection{Proof of Lemma \ref{theo1}}
\begin{proof}
       Let $\hat{\boldsymbol{\mu }}\left( t \right) \in \mathbb{R}^d$ be the  Maximum A Posteriori (MAP) estimate of $\boldsymbol{\mu }$ in round $t$, $\tilde{\boldsymbol{\mu }}(t) \in \mathbb{R}^d$ be the posterior sample in round $t$, and $H_t$ denote  the union of  history  and the contexts in round $t$. Note that in posterior sampling, $\mathbb{P} \left( \tilde{\boldsymbol{\mu}}\left( t \right) =\bar{\boldsymbol{\mu}}\mid H_t \right) =\mathbb{P} \left( \boldsymbol{\mu }=\bar{\boldsymbol{\mu}}\mid H_t \right) $
    for all $\bar{\boldsymbol{\mu}}$, and $
        \tilde{\boldsymbol{\mu }}(t)\sim \mathcal{N} \left( \hat{\boldsymbol{\mu }}\left( t \right) ,v^2B\left( t \right) ^{-1} \right)$, where $B(1)=\varSigma _*^{-1}$.

    Denote $\theta _i\left( t \right) =\boldsymbol{b }_i\left( t \right) ^T\boldsymbol{\mu }$, $ \hat{\theta}_i\left( t \right) =\boldsymbol{b }_i\left( t \right) ^T\hat{\boldsymbol{\mu }}\left( t \right) $ and $\tilde{\theta}_i\left( t \right) =\boldsymbol{b }_i\left( t \right) ^T\tilde{\boldsymbol{\mu }}\left( t \right) .$ Then we obtain that
    $$\tilde{\theta}_i\left( t \right) \sim \mathcal{N} \left( \hat{\theta}_i\left( t \right) ,v^2s_{i}\left( t \right)^{2} \right) , \  s_{i}\left( t \right)^{2} =\boldsymbol{b }_i\left( t \right) ^TB\left( t \right) ^{-1}\boldsymbol{b }_i\left( t \right),$$
    and
    $\mathbb{P} \left( \tilde{\theta}_i\left( t \right) =\bar{\theta}_i\mid H_t \right) =\mathbb{P} \left( \theta _i\left( t \right) =\bar{\theta}_i\mid H_t \right) $ for all $\bar{\theta}_i$ and $i\in [k]$.

    Accordingly, a high-probability confidence interval of $\theta _i\left( t \right)$  is $C_i\left( t \right) =v s_i\left( t \right) \sqrt{2\log \left( \frac{1}{\delta} \right)}$, where $\delta>0$ is the confidence level. Let
    $$
        E_t=\left\{ \forall i\in [k]:\left| \theta _i(t)-\hat{\theta}_i(t) \right|\leqslant C_i\left( t \right) \right\}
    $$
    be the event that all confidence intervals in round $t$ hold.

  Fix round $t$, and the regret can be decomposed as
    $$
        \begin{aligned}
            \mathbb{E} \left[ \theta _{A_{t}^*  }\left( t \right) -\theta _{A_t}\left( t \right) \right] & =\mathbb{E} \left[ \mathbb{E} \left[ \theta _{A_{t}^*  }\left( t \right) -\theta _{A_t}\left( t \right) \mid H_t \right] \right]                                                                                                                                                                                                 \\
                                                                                                       & =\mathbb{E} \left[ \mathbb{E} \left[ \theta _{A_{t}^*  }\left( t \right) -\hat{\theta}_{A_{t}^*  }\left( t \right) -C_{A_{t}^*  }\left( t \right) \mid H_t \right] \right] +\mathbb{E} \left[ \mathbb{E} \left[ \hat{\theta}_{A_t}\left( t \right) +C_{A_t}\left( t \right) -\theta _{A_t}\left( t \right) \mid H_t \right] \right].
        \end{aligned}
    $$
    The first equality is an application of the tower rule. The second equality holds because $A_{t} \mid H_t$ and $A_{t}^*   \mid H_t$ have the same distributions, and $\hat{\theta}_i(t)$ and $C_i(t)$, $i \in [k]$, are deterministic given history $H_t$.

    We start with the first term in the decomposition. Fix history $H_t$, then we introduce event $E_t$ and get
    $$
        \begin{aligned}
            \mathbb{E} \left[ \theta _{A_{t}^*  }\left( t \right) -\hat{\theta}_{A_{t}^*  }\left( t \right) -C_{A_{t}^*  }\left( t \right) \mid H_t \right] & =\mathbb{E} \left[ \theta _{A_{t}^*  }\left( t \right) -\hat{\theta}_{A_{t}^*  }\left( t \right) \mid H_t \right] -\mathbb{E} \left[ C_{A_{t}^*  }\left( t \right) \mid H_t \right] \\
                                                                                                                                                      & \leqslant \mathbb{E} \left[ \left( \theta _{A_{t}^*  }\left( t \right) -\hat{\theta}_{A_{t}^*  }\left( t \right) \right) \mathbf{1}\left\{ \bar{E}_t \right\} \mid H_t \right] ,
        \end{aligned}
    $$
    where the inequality follows from the observation that
    $$
        \mathbb{E} \left[ \left( \theta _{A_{t}^*  }\left( t \right) -\hat{\theta}_{A_{t}^*  }\left( t \right) \right) \mathbf{1}\left\{ E_t \right\} \mid H_t \right] \leqslant \mathbb{E} \left[ C_{A_{t}^*  }\left( t \right) \mid H_t \right] .
    $$
    Since $
        \theta _i\left( t \right) -\hat{\theta}_i\left( t \right) \mid H_t\sim \mathcal{N} \left( 0,v^2s_i\left( t \right) ^2 \right) , i\in \left[ k \right]
    $, we further have
    \begin{equation}\label{eq3}
        \begin{aligned}
            \mathbb{E} \left[ \left( \theta _{A_{t}^*  }\left( t \right) -\hat{\theta}_{A_{t}^*  }\left( t \right) \right) \mathbf{1}\left\{ \bar{E}_t \right\} \mid H_t \right] & \leqslant \sum_{i=1}^k{\frac{1}{\sqrt{2\pi v^2s_i\left( t \right) ^2}}}\int_{x=C_i\left( t \right)}^{\infty}{x}\exp \left[ -\frac{x^2}{2v^2s_i\left( t \right) ^2} \right] \mathrm{d}x                                \\
                                                                                                                                                                             & =\sum_{i=1}^k{-}\sqrt{\frac{v^2s_i\left( t \right) ^2}{2\pi}}\int_{x=C_i\left( t \right)}^{\infty}{\frac{\partial}{\partial x}}\left( \exp \left[ -\frac{x^2}{2v^2s_i\left( t \right) ^2} \right] \right) \mathrm{d}x \\
                                                                                                                                                                             & =\sum_{i=1}^k{\sqrt{\frac{v^2s_i\left( t \right) ^2}{2\pi}}}\delta \leqslant k\delta v\sqrt{\frac{1}{2\pi \lambda _{\min}}},                                                                                          \\
        \end{aligned}
    \end{equation}
    in which the last inequality is obtained by  \eqref{eq17}.

    For the second the term in the regret decomposition, we have
    $$
        \mathbb{E} \left[ \hat{\theta}_{A_t}\left( t \right) +C_{A_t}\left( t \right) -\theta _{A_t}\left( t \right) \mid H_t \right] \leqslant 2\mathbb{E} \left[ C_{A_t}\left( t \right) \mid H_t \right] +\mathbb{E} \left[ \left( \hat{\theta}_{A_t}\left( t \right) -\theta _{A_t}\left( t \right) \right) \mathbf{1}\left\{ \bar{E}_t \right\} \mid H_t \right] ,
    $$
    where the inequality follows from the observation that
    $$
        \mathbb{E} \left[ \left( \hat{\theta}_{A_t}\left( t \right) -\theta _{A_t}\left( t \right) \right) \mathbf{1}\left\{ E_t \right\} \mid H_t \right] \leqslant \mathbb{E} \left[ C_{A_t}\left( t \right) \mid H_t \right] .
    $$

    The other term is bounded as in \eqref{eq3}. Now we chain all inequalities for the regret in round $t$ and get
    $$
        \mathbb{E} \left[ \theta _{A_{t}^*  }\left( t \right) -\theta _{A_t}\left( t \right) \right] \leqslant 2\mathbb{E} \left[ C_{A_t}\left( t \right) \right] +k\delta v\sqrt{\frac{2}{\pi \lambda _{\min}}}.
    $$
    Therefore, the $n$-round Bayes regret is bounded as
    $$
        \mathbb{E} \left[ \sum_{t=1}^n{\theta _{A_{t}^*  }\left( t \right) -\theta _{A_t}\left( t \right)} \right] \leqslant 2\mathbb{E} \left[ \sum_{t=1}^n{C_{A_t}\left( t \right)} \right] +nk\delta v\sqrt{\frac{2}{\pi \lambda _{\min}}}.
    $$

    The last part is to bound $\mathbb{E} \left[ \sum_{t=1}^n{C_{A_t}\left( t \right)} \right]$ from above. 
    By \eqref{eq16}, we have
$$
\begin{aligned}
	\mathbb{E} \left[ \sum_{t=1}^n{C_{A_t}\left( t \right)} \right] &=\mathbb{E} \left[ \sum_{t=1}^n{vs_{A_t}\left( t \right) \sqrt{2\log \left( \frac{1}{\delta} \right)}} \right] =v\sqrt{2\log \left( \frac{1}{\delta} \right)}\mathbb{E} \left[ \sum_{t=1}^n{s_{A_t}\left( t \right)} \right]\\
	&\leqslant v\left( \sqrt{\frac{1}{\lambda _{\min}}}+\sqrt{\frac{n-1}{\vartheta}} \right) \sqrt{2\log \left( \frac{1}{\delta} \right)}.
\end{aligned}
$$
    Now we chain all inequalities and this completes the proof.

\end{proof}

\subsection{Proof of Lemma \ref{theo2}}
\begin{proof}
    Denote $\theta _i\left( t \right) =\boldsymbol{b }_i\left( t \right) ^T\boldsymbol{\mu }$, $\theta _{i,*}\left( t \right) =\boldsymbol{b}_i\left( t \right) ^T\boldsymbol{\mu }_*$ and   we obtain that
    $$
        \theta _i\left( t \right) \sim \mathcal{N} \left( \theta _{i,*}\left( t \right) ,v^2\sigma _i\left( t \right) ^2 \right) ,\quad \sigma _i\left( t \right) ^2=\boldsymbol{b}_i\left( t \right) ^T\varSigma _*\boldsymbol{b}_i\left( t \right) .
    $$
    First, we bound the regret when $\boldsymbol{\mu }$ is not close to $\boldsymbol{\mu }_*$. Let
    $$
        E_t=\left\{ \forall i\in [k]:\left| \theta _i\left( t \right) -\theta _{i,*}\left( t \right) \right|\leqslant c_i(t) \right\}
    $$
    be the event that $\boldsymbol{\mu }$ is close to $\boldsymbol{\mu }_*$, where $
        c_i\left( t \right) =v\sigma _i\left( t \right) \sqrt{2 \log \left( \frac{1}{\delta} \right) }
    $ is the corresponding confidence interval. Then
    $$
        \begin{aligned}
             & \quad \left| \mathbb{E} \left[ \theta _a\left( t \right) \right] -\mathbb{E} \left[ \theta _a\left( t \right) \mathbf{1}\left\{ E_t \right\} \right] \right|                                                                                                                                                                                          \\
             & =\left| \mathbb{E} \left[ (\theta _a\left( t \right) -\theta _{a,*}\left( t \right) )\mathbf{1}\left\{ \bar{E}_t \right\} \right] +\mathbb{E} \left[ \theta _{a,*}\left( t \right) \mathbf{1}\left\{ \bar{E}_t \right\} \right] \right|                                                                                                               \\
             & \leqslant \mathbb{E} \left[ \left| \theta _a\left( t \right) -\theta _{a,*}\left( t \right) \right|\mathbf{1}\left\{ \bar{E}_t \right\} \right] +\mathbb{E} \left[ \left| \theta _{a,*}\left( t \right) \right|\mathbf{1}\left\{ \bar{E}_t \right\} \right]                                                                                           \\
             & \leqslant 2\sum_{i=1}^k{\left\{ \frac{1}{\sqrt{2\pi v^2\sigma _i\left( t \right) ^2}}\int_{x=c_i\left( t \right)}^{\infty}{x}\exp \left[ -\frac{x^2}{2v^2\sigma _i\left( t \right) ^2} \right] \mathrm{d}x+\left| \theta _{i,*}\left( t \right) \right|\exp \left[ -\frac{c_i\left( t \right) ^2}{2v^2\sigma _i\left( t \right) ^2} \right] \right\}} \\
             & =2\sum_{i=1}^k{\left\{ -\sqrt{\frac{v^2\sigma _i\left( t \right) ^2}{2\pi}}\int_{x=c_i\left( t \right)}^{\infty}{\left( \exp \left[ -\frac{x^2}{2v^2\sigma _i\left( t \right) ^2} \right] \right) ^{\prime}}\mathrm{d}x+\left| \theta _{i,*}\left( t \right) \right|\delta \right\}}                                                                  \\
             & =2\delta \sum_{i=1}^k{\left\{ \sqrt{\frac{v^2\sigma _i\left( t \right) ^2}{2\pi}}+\left| \theta _{i,*}\left( t \right) \right| \right\}}                                                                                                                                                                                                              \\
             & \leqslant 2k\delta \left( \sqrt{\frac{v^2}{2\pi \lambda _{\min}}}+\left\| \boldsymbol{\mu }_* \right\|   \right) ,
        \end{aligned}
    $$
    where the last inequality is obtained  by   $
        \left| \theta _{i,*}\left( t \right) \right|\leqslant \left\| \boldsymbol{b}_i\left( t \right) \right\| \left\| \boldsymbol{\mu }_* \right\| \leqslant \left\| \boldsymbol{\mu }_* \right\|
    $.

    Now we apply this decomposition to both $\theta _{\hat{A}_t}\left( t \right)$ and $\theta _{\tilde{A}_t}\left( t \right)$ below, and get
    $$
        \mathbb{E} \left[ \sum_{t=1}^n{\theta _{\hat{A}_t}\left( t \right) -\theta _{\tilde{A}_t}\left( t \right)} \right] \leqslant \mathbb{E} \left[ \mathbf{1}\left\{ E_t \right\} \sum_{t=1}^n{\theta _{\hat{A}_t}\left( t \right) -\theta _{\tilde{A}_t}\left( t \right)} \right] +4nk\delta \left( \sqrt{\frac{v^2}{2\pi \lambda _{\min}}}+\left\| \boldsymbol{\mu }_* \right\|   \right) .
    $$


    The primary obstacle in bounding the aforementioned first term arises from the potential significant deviation in the posterior distributions of  $\hat{A}_t$ and $\tilde{A}_t$, contingent upon the divergence in their respective histories. Consequently, leveraging solely the difference in their prior means, denoted as $\varepsilon$, to constrain their divergence poses a formidable challenge.

    Similar to the analysis in the proof of Lemma 5 in \cite{metaTS}, we have that in round 1, the two TS algorithms behave differently, on average over the posterior samples, in $\sum_{i=1}^k\left|\mathbb{P}\left(\hat{A}_1=i\right)-\mathbb{P}\left(\tilde{A}_1=i\right)\right|$  fraction of runs.
    We bound the difference of their future rewards trivially by $$
\theta _{\hat{A}_t}\left( t \right) -\theta _{\tilde{A}_t}\left( t \right) \leqslant \left| \theta _{\tilde{A}_t,*}\left( t \right) \right|+\left| \theta _{\hat{A}_t,*}\left( t \right) \right|+c_{\hat{A}_t}(t)+c_{\tilde{A}_t}(t)\leqslant 2\left( \left\| \boldsymbol{\mu }_* \right\| +c_0 \right) ,
$$
    where $
        c_0=v\sqrt{\frac{2 \log \left( \frac{1}{\delta} \right) }{\lambda _{\min}}}
    $.

    Now we apply this bound from round 2 to $n$, conditioned on both algorithms having the same history distributions, and get
$$
\mathbb{E} \left[ 1\left\{ E_t \right\} \sum_{t=1}^n{\theta _{\hat{A}_t}\left( t \right) -\theta _{\tilde{A}_t}\left( t \right)} \right] \leqslant 2\left( \left\| \boldsymbol{\mu }_* \right\| +c_0 \right) \sum_{t=1}^n{\left\{ \max_{h\in \mathcal{H} _t} \sum_{i=1}^k{\left| \mathbb{P} \left( \hat{A}_t=i\mid \hat{H}_t=h \right) -\mathbb{P} \left( \tilde{A}_t=i\mid \tilde{H}_t=h \right) \right|} \right\}},
$$
    where $\hat{H}_t$ is the history for $\hat{A}_t, \tilde{H}_t$ is the history for $\tilde{A}_t$, and $\mathcal{H}_t$ is the set of all possible histories in round $t$. Finally, we bound the last term above using $\varepsilon$.

    Fix round $t$ and history $h \in \mathcal{H}_t$.    Let  $
        \dot{\boldsymbol{\mu}}\left( t \right)
    $ and $
        \ddot{\boldsymbol{\mu}}\left( t \right)
    $ be the posterior samples of two TS algorithm in round $t$. Let $p(\theta )=\mathbb{P} \left( \hat{\theta}\left( t \right) =\theta \mid \hat{H}_t=h \right) $ and $
        q(\theta )=\mathbb{P} \left( \tilde{\theta}\left( t \right) =\theta \mid \tilde{H}_t=h \right)
    $, where  the $i$-elements of $\hat{\theta}\left( t \right)$ and $\tilde{\theta}\left( t \right)$ are $
        \boldsymbol{b}_i\left( t \right) ^T\dot{\boldsymbol{\mu}}\left( t \right) $ and $ \boldsymbol{b}_i\left( t \right) ^T\ddot{\boldsymbol{\mu}}\left( t \right)
    $.
    Then, since the pulled arms are deterministic functions of their posterior samples, we have
    $$
        \sum_{i=1}^k\left|\mathbb{P}\left(\hat{A}_t=i \mid \hat{H}_t=h\right)-\mathbb{P}\left(\tilde{A}_t=i \mid \tilde{H}_t=h\right)\right| \leqslant \int_\theta|p(\theta)-q(\theta)| \mathrm{d} \theta .
    $$
    Moreover, $p(\theta)=\prod_{i=1}^k p\left(\theta_i\right)$ and $q(\theta)=\prod_{i=1}^k q\left(\theta_i\right)$ when the reward noise and prior distributions factor over individual arms, and according to the proof of Lemma 5 in \cite{metaTS}, we have
    $$
        \int_\theta|p(\theta)-q(\theta)| \mathrm{d} \theta \leqslant \sum_{i=1}^k \int_{\theta_i}\left|p\left(\theta_i\right)-q\left(\theta_i\right)\right| \mathrm{d} \theta_i .
    $$
    Note that  $
        p\left( \theta _i \right) =\varphi \left( \theta _i;\boldsymbol{b}_i\left( t \right) ^T\hat{\boldsymbol{\mu}}\left( t \right) ,v^2s_i\left( t \right) ^2 \right)
    $ and $
        q\left( \theta _i \right) =\varphi \left( \theta _i;\boldsymbol{b}_i\left( t \right) ^T\tilde{\boldsymbol{\mu}}\left( t \right) ,v^2s_i\left( t \right) ^2 \right)
    $, where
    $$
        \tilde{\boldsymbol{\mu}}\left( t \right) =B\left( t \right) ^{-1}\left[ \varSigma _{*}^{-1}\tilde{\boldsymbol{\mu}}+\sum_{\tau =1}^{t-1}{\boldsymbol{b}_{A_{\tau}}\left( \tau \right) r_{A_{\tau}}\left( \tau \right)} \right] ,\quad
        \hat{\boldsymbol{\mu}}\left( t \right) =B\left( t \right) ^{-1}\left[ \varSigma _{*}^{-1}\hat{\boldsymbol{\mu}}+\sum_{\tau =1}^{t-1}{\boldsymbol{b}_{A_{\tau}}\left( \tau \right) r_{A_{\tau}}\left( \tau \right)} \right] .
    $$
    Then, under the assumption that $
        \left\| \hat{\boldsymbol{\mu}}-\tilde{\boldsymbol{\mu}} \right\| \leqslant \varepsilon $
    , each above integral is bounded as
    $$
        \begin{aligned}
            \int_{\theta _i}{\left| p\left( \theta _i \right) -q\left( \theta _i \right) \right|}\mathrm{d}\theta _i & \leqslant \frac{2}{\sqrt{2\pi v^2s_i\left( t \right) ^2}}\left| \boldsymbol{b}_i\left( t \right) ^T\hat{\boldsymbol{\mu}}\left( t \right) -\boldsymbol{b}_i\left( t \right) ^T\tilde{\boldsymbol{\mu}}\left( t \right) \right|
            \\
                                                                                                                     & =\frac{2}{\sqrt{2\pi v^2s_i\left( t \right) ^2}}\left| \boldsymbol{b}_i\left( t \right) ^TB\left( t \right) ^{-1}\varSigma _{*}^{-1}\left( \hat{\boldsymbol{\mu}}-\tilde{\boldsymbol{\mu}} \right) \right|
            \\
                                                                                                                     & \leqslant \sqrt{\frac{2}{\pi v^2}}\frac{\left\| \boldsymbol{b}_i\left( t \right) ^TB\left( t \right) ^{-1} \right\|}{\sqrt{s_i\left( t \right) ^2}}\left\| \varSigma _{*}^{-1} \right\| \left\| \hat{\boldsymbol{\mu}}-\tilde{\boldsymbol{\mu}} \right\|
            \\
                                                                                                                     & \leqslant \sqrt{\frac{2}{\pi v^2}}\frac{\left\| \boldsymbol{b}_i\left( t \right) ^TB\left( t \right) ^{-1} \right\|}{\sqrt{s_i\left( t \right) ^2}}\sqrt{\lambda _{\max}}\varepsilon.
        \end{aligned}
    $$
    The first inequality holds for any two shifted non-negative unimodal functions, with maximum $
        \frac{1}{\sqrt{2\pi v^2s_i\left( t \right) ^2}}
    $.
    Finally, we need to bound $\frac{\left\| \boldsymbol{b}_i\left( t \right) ^TB\left( t \right) ^{-1} \right\|}{\sqrt{s_i\left( t \right) ^2}}$. Denote by $
        \boldsymbol{y}=B\left( t \right) ^{-1}\boldsymbol{b}_i\left( t \right)
    $.  According to Rayleigh theorem, 
    $$
        \frac{\left\| \boldsymbol{b}_i\left( t \right) ^TB\left( t \right) ^{-1} \right\|}{\sqrt{s_i\left( t \right) ^2}}=\sqrt{\frac{\boldsymbol{y}^T\boldsymbol{y}}{\boldsymbol{y}^TB\left( t \right) \boldsymbol{y}}}\leqslant \sqrt{\frac{1}{\lambda _{\min}\left( B\left( t \right) \right)}}.
    $$  Thus, $
\int_{\theta}{|}p(\theta )-q(\theta )|\mathrm{d}\theta \leqslant k\sqrt{\frac{2\lambda _{\max}}{\pi v^2\lambda _{\min}\left( B\left( t \right) \right)}}\varepsilon .
$ By Lemma \ref{lem10}, we get 
$$
\begin{aligned}
	\sum_{t=1}^n{\left\{ \sum_{i=1}^k{\left| \mathbb{P} \left( \hat{A}_t=i\mid \hat{H}_t=h \right) -\mathbb{P} \left( \tilde{A}_t=i\mid \tilde{H}_t=h \right) \right|} \right\}}&\leqslant k\varepsilon \sqrt{\frac{2\lambda _{\max}}{\pi v^2}}\sum_{t=1}^n{\sqrt{\frac{1}{\lambda _{\min}\left( B\left( t \right) \right)}}}\\
	&\leqslant k\varepsilon \left( \sqrt{\frac{1}{\lambda _{\min}}}+\sqrt{\frac{n-1}{\vartheta}} \right) \sqrt{\frac{2\lambda _{\max}}{\pi v^2}}.\\
\end{aligned}
$$

    This completes the proof.
\end{proof}

\subsection{Proof of Lemma \ref{theo3G}} \label{prooftheo3G}
\begin{proof}
    The key idea in this proof is that $
        \boldsymbol{\mu }_*|H_{1:s}\sim \mathcal{N} \left( \boldsymbol{\mu }_{Q,s},v^2\varSigma _{Q,s} \right)
    $.
    By SVD decomposition of $\varSigma _{Q,s}$, we can get $
        \varSigma _{Q,s}=UDU^{T}
    $, where $U$ is an orthogonal matrix, $
        D=\mathrm{diag}\left( \left( \lambda _{s,i} \right) _{i=1}^{d} \right)
    $ and $\lambda _{s,i}$ is the $i$-th largest eigenvalue of $\varSigma _{Q,s}$. Let $
        \boldsymbol{y}=U^T\left( \boldsymbol{\mu }_*-\boldsymbol{\mu }_{Q,s} \right) $ and it follows that $
        \boldsymbol{y}|H_{1:s}\sim \mathcal{N} \left( \boldsymbol{0},v^2D \right)
    $. To simplify notation, let $
        \boldsymbol{y}=\left( y_i \right) _{i=1}^{d}
    $. Note that $
        \left\| \boldsymbol{y} \right\|=\left\| \boldsymbol{\mu }_*-\boldsymbol{\mu }_{Q,s} \right\|
    $, then   we have that for any $\varepsilon>0$,
    \begin{equation}\label{eq4}
        \begin{aligned}
            \mathbb{P} \left( \left\| \boldsymbol{\mu }_*-\boldsymbol{\mu }_{Q,s} \right\| >\varepsilon \mid H_{1:s-1} \right) & = \mathbb{P} \left( \left\| \boldsymbol{y} \right\| >\varepsilon \mid H_{1:s-1} \right) \leqslant \mathbb{P} \left( \left\| \boldsymbol{y} \right\| _{\infty}>\frac{\varepsilon}{\sqrt{d}}\mid H_{1:s-1} \right)
            \\
                                                                                                                               &
            \leqslant \sum_{i=1}^d{\mathbb{P} \left( \left| y_i \right|>\frac{\varepsilon}{\sqrt{d}}\mid H_{1:s-1} \right)}.
        \end{aligned}
    \end{equation}

    Since $y_i|H_{1:s}\sim \mathcal{N} \left( 0,v^2\lambda _{s,i} \right) $,
    $$
        \mathbb{P} \left( \left| y_i \right|>\frac{\varepsilon}{\sqrt{d}}\mid H_{1:s-1} \right) \leqslant 2\exp \left[ -\frac{\left( \frac{\varepsilon}{\sqrt{d}} \right) ^2}{v^2\lambda _{s,i}} \right] \leqslant 2\exp \left[ -\frac{\varepsilon ^2}{dv^2\lambda _{s,1}} \right] .
    $$


    Note that
    $$
        \begin{aligned}
             \lambda _{\max}\left( \varSigma _{Q,s+1} \right) & =\frac{1}{ \lambda_{\min}\left( \varSigma _{Q,s}^{-1}+\varSigma _{*}^{-1}-\left( \varSigma _*\left( \sum_{t=1}^n{\boldsymbol{b}_{A_t}\left( t \right) \boldsymbol{b}_{A_t}\left( t \right) ^T} \right) \varSigma _*+\varSigma _* \right) ^{-1} \right)}                                      \\
                                                          & \leqslant \frac{1}{ \lambda _{\min}\left( \varSigma _{Q,s}^{-1} \right) + \lambda _{\min}\left( \varSigma _{*}^{-1}-\left( \varSigma _*\left( \sum_{t=1}^n{\boldsymbol{b}_{A_t}\left( t \right) \boldsymbol{b}_{A_t}\left( t \right) ^T} \right) \varSigma _*+\varSigma _* \right) ^{-1} \right)} \\
                                                          & <\frac{1}{ \lambda _{\min}\left( \varSigma _{Q,s}^{-1} \right)}= \lambda _{\max}\left( \varSigma _{Q,s} \right).
        \end{aligned}
    $$
    The second inequality follows from the fact  that $\varSigma _{*}^{-1}-\left( \varSigma _*\left( \sum_{t=1}^n{\boldsymbol{b}_{A_t}\left( t \right) \boldsymbol{b}_{A_t}\left( t \right) ^T} \right) \varSigma _*+\varSigma _* \right) ^{-1}$ is positive definite.

    Moreover, we obtain
    $$
        \begin{aligned}
             \lambda _{\max}\left( \varSigma _{Q,s+1} \right) & \leqslant \frac{1}{ \lambda _{\min}\left( \varSigma _{Q,s}^{-1} \right) + \lambda _{\min}\left( \varSigma _{*}^{-1}-\left( \varSigma _*\left( \sum_{t=1}^n{\boldsymbol{b}_{A_t}\left( t \right) \boldsymbol{b}_{A_t}\left( t \right) ^T} \right) \varSigma _*+\varSigma _* \right) ^{-1} \right)} \\
                                                          & \leqslant \frac{1}{ \lambda _{\min}\left( \varSigma _{Q,s}^{-1} \right) +\frac{1}{2} \lambda _{\min}\left( \varSigma _{*}^{-1} \right)}=\frac{ \lambda _{\max}\left( \varSigma _{Q,s} \right)}{1+\frac{\lambda _{\min}}{2}\cdot \lambda _{\max}\left( \varSigma _{Q,s} \right)}                                \\
                                                          & \leqslant \frac{ \lambda _{\max}\left( \varSigma _{Q,s} \right)}{1+\frac{\lambda _{\min}}{2}\cdot \lambda _{\max}\left( \varSigma _{Q,s+1} \right)}.
        \end{aligned}
    $$

    Based on the above equation, we get
    $$
        \begin{aligned}
                        & \lambda _{\min} \cdot \lambda_{\max}\left( \varSigma _{Q,s+1} \right) +1\leqslant \sqrt{1+2\lambda _{\min}\cdot \lambda _{\max}\left( \varSigma _{Q,s} \right)}\leqslant \frac{7}{8}\lambda _{\min} \cdot\lambda _{\max}\left( \varSigma _{Q,s} \right) +\frac{101}{100}
            \\
            \Rightarrow &  \lambda _{\max}\left( \varSigma _{Q,s+1} \right) \leqslant \frac{7}{8} \lambda _{\max}\left( \varSigma _{Q,s} \right) +\frac{1}{100\lambda _{\min}},
        \end{aligned}
    $$
    and
    $$
        \lambda_{s,1}=\lambda _{\max}\left( \varSigma _{Q,s} \right) \leqslant \left( \lambda _{\max}\left( \varSigma _{Q} \right)-\frac{2}{175\lambda _{\min}} \right) \left( \frac{7}{8} \right) ^{s-1}+\frac{2}{175\lambda _{\min}}.
    $$

    Thus,
    \begin{equation}\label{eq5}
        \mathbb{P} \left( \left| y_i \right|>\frac{\varepsilon}{\sqrt{d}}\mid H_{1:s-1} \right) \leqslant 2\exp \left[ -\frac{\varepsilon ^2}{dv^2\lambda _{s,1}} \right] \leqslant 2\exp \left[ -\frac{\varepsilon ^2}{dv^2\left( \left(  \lambda _{\max}\left( \varSigma _{Q} \right)-\frac{2}{175\lambda _{\min}} \right) \left( \frac{7}{8} \right) ^{s-1}+\frac{2}{175\lambda _{\min}} \right)} \right] .
    \end{equation}

    Combined with \eqref{eq4} and \eqref{eq5}, we can get
    $$
        \mathbb{P} \left( \left\| \boldsymbol{\mu }_*-\boldsymbol{\mu }_{Q,s} \right\| >\varepsilon \mid H_{1:s-1} \right) \leqslant2d\exp \left[ -\frac{\varepsilon ^2}{dv^2\left( \left(  \lambda _{\max}\left( \varSigma _{Q} \right)-\frac{2}{175\lambda _{\min}} \right) \left( \frac{7}{8} \right) ^{s-1}+\frac{2}{175\lambda _{\min}} \right)} \right].
    $$

    Now we choose $
        \varepsilon_s =v\sqrt{d\left( \left(  \lambda _{\max}\left( \varSigma _{Q} \right)-\frac{2}{175\lambda _{\min}} \right) \left( \frac{7}{8} \right) ^{s-1}+\frac{2}{175\lambda _{\min}} \right) \log \left( \frac{2d}{\delta} \right)}
    $
    and get that
    \begin{equation}
        \mathbb{P} \left( \left\| \boldsymbol{\mu }_*-\boldsymbol{\mu }_{Q,s} \right\| >\varepsilon_s \mid H_{1:s-1} \right) \leqslant {\delta}
    \end{equation}
    for any task $s$ and history $H_{1: s-1}$. It follows that
    $$
        \mathbb{P} \left( \bigcup_{s=1}^m{\left\{ \left\| \boldsymbol{\mu }_*-\boldsymbol{\mu }_{Q,s} \right\| >\varepsilon_s \right\}} \right) \leqslant \sum_{s=1}^m{\mathbb{P}}\left( \left\| \boldsymbol{\mu }_*-\boldsymbol{\mu }_{Q,s} \right\| >\varepsilon_s \right) =\sum_{s=1}^m{\mathbb{E}}\left[ \mathbb{P} \left( \left\| \boldsymbol{\mu }_*-\boldsymbol{\mu }_{Q,s} \right\| >\varepsilon_s \mid H_{1:s-1} \right) \right] \leqslant m\delta.
    $$
\end{proof}

\subsection{Proof of Theorem \ref{theo_mainG}}
\begin{proof}
    First, we bound the magnitude of $\boldsymbol{\mu }_*.$ Specifically
    since $\boldsymbol{\mu }_*\sim \mathcal{N}(\boldsymbol{\mu }_Q,v^2 \varSigma _Q )$, according to the proof of Lemma \ref{theo3G}, we have that
    \begin{equation}\label{eq7}
        \left\| \boldsymbol{\mu }_* \right\| -\left\| \boldsymbol{\mu }_Q \right\| \leqslant \left\| \boldsymbol{\mu }_*-\boldsymbol{\mu }_Q \right\| \leqslant v\sqrt{d \lambda _{\max}\left( \varSigma _{Q} \right)\log \left( \frac{2d}{\delta} \right)}
    \end{equation}
    holds with probability at least $1- \delta$.

    Now we decompose its regret. Let $\hat{A}_{s,t}$ be the optimal arm in round $t $ of instance $\boldsymbol{\mu }_{s}$, $A_{s,t}$ be the pulled arm in round $t$ by TS with misspecified prior $P_s=\mathcal{N}({\boldsymbol{\mu }}_{Q,s},v^2 \varSigma _* )$, and $\tilde{A}_{s,t}$ be the pulled arm in round $t$ by TS with correct prior $P_*=\mathcal{N}(\boldsymbol{\mu }_*,v^2 \varSigma _* )$. Denote $    \theta _{s,i}\left( t \right) =\boldsymbol{b}_i\left( t \right) ^T\boldsymbol{\mu }_s$, and  then
    $$
        \mathbb{E} \left[ \sum_{t=1}^n{\theta _{s,\hat{A}_{s,t}}\left( t \right) -\theta _{s,A_{s,t}}\left( t \right)} \middle| \:P_* \right] =R_{s,1}+R_{s,2},
    $$
    where
    $$
        R_{s,1}=\mathbb{E} \left[ \sum_{t=1}^n{\theta _{s,\hat{A}_{s,t}}\left( t \right) -\theta _{s,\tilde{A}_{s,t}}\left( t \right)}\middle| \:P_* \right] ,\: R_{s,2}=\mathbb{E} \left[ \sum_{t=1}^n{\theta _{s,\tilde{A}_{s,t}}\left( t \right) -\theta _{s,A_{s,t}}\left( t \right)}\:\middle| \:P_* \right] \:.
    $$
    The term $R_{s,1}$ is the regret of hypothetical TS that knows $P_*.$ This TS is introduced only for the purpose of analysis and is the optimal policy. The term $R_{s,2}$ is the difference in the expected $n$-round rewards of TS with priors  $P_s$  and $P_*$, and vanishes as the number of tasks $s$ increases.

    To bound $R_{s,1}$, we apply Lemma \ref{theo1} with $\delta=\frac{1}{n}$ and get
  $$
R_{s,1}\leqslant 2v\left( \sqrt{\frac{1}{\lambda _{\min}}}+\sqrt{\frac{n-1}{\vartheta}} \right) \sqrt{2\log \left( n \right)}+kv\sqrt{\frac{2}{\pi \lambda _{\min}}}.
$$
 To bound $R_{s,2}$, we apply Lemma \ref{theo2} with $\delta=\frac{1}{n}$ and get
$$
R_{s,2}\leqslant \frac{2k}{v}\left( \left\| \boldsymbol{\mu }_* \right\| +v\sqrt{\frac{2\log \left( n \right)}{\lambda _{\min}}} \right) \left( \sqrt{\frac{1}{\lambda _{\min}}}+\sqrt{\frac{n-1}{\vartheta}} \right) \sqrt{\frac{2\lambda _{\max}}{\pi}}\left\| \boldsymbol{\mu }_{Q,s}-\boldsymbol{\mu }_* \right\| +4kv\left( \sqrt{\frac{1}{2\pi \lambda _{\min}}}+\left\| \boldsymbol{\mu }_* \right\| \right) .
$$
    Let  $
        u_1\left( \delta \right) =\left\| \boldsymbol{\mu }_Q \right\| +v\sqrt{d \lambda _{\max}\left( \varSigma _{Q} \right)\log \left( \frac{2d}{\delta} \right)}
    $ and  according to \eqref{eq7}, we get
$$
R_{s,2}\leqslant \frac{2k}{v}\left( u_1\left( \delta \right) +v\sqrt{\frac{2\log \left( n \right)}{\lambda _{\min}}} \right) \left( \sqrt{\frac{1}{\lambda _{\min}}}+\sqrt{\frac{n-1}{\vartheta}} \right) \sqrt{\frac{2\lambda _{\max}}{\pi}}\left\| \boldsymbol{\mu }_{Q,s}-\boldsymbol{\mu }_* \right\| +4kv\left( \sqrt{\frac{1}{2\pi \lambda _{\min}}}+u_1\left( \delta \right) \right) .
$$
    holds with probability at least $1- \delta$.

    By Lemma \ref{theo3G}, we  have with probability at least $1-m\delta$ that
    $$
        \begin{aligned}
            \sum_{s=1}^m{\left\| \boldsymbol{\mu }_{Q,s}-\boldsymbol{\mu }_* \right\|} & \leqslant v\sqrt{d\log \left( \frac{2d}{\delta} \right)}\sum_{s=1}^m{\sqrt{\left(  \lambda _{\max}\left( \varSigma _{Q} \right)-\frac{2}{175\lambda _{\min}} \right) \left( \frac{7}{8} \right) ^{s-1}+\frac{2}{175\lambda _{\min}}}}
            \\
                                                                                       & \leqslant v\sqrt{d\log \left( \frac{2d}{\delta} \right)}\left( \sqrt{ \lambda _{\max}\left( \varSigma _{Q} \right)-\frac{2}{175\lambda _{\min}}}\sum_{s=1}^m{\sqrt{\left( \frac{7}{8} \right) ^{s-1}}}+\sum_{s=1}^m{\sqrt{\frac{2}{175\lambda _{\min}}}} \right)
            \\
                                                                                       & \leqslant 4v\log \left( m \right) \sqrt{d\left(  \lambda _{\max}\left( \varSigma _{Q} \right)-\frac{2}{175\lambda _{\min}} \right) \log \left( \frac{2d}{\delta} \right)}+vm\sqrt{d\frac{2}{175\lambda _{\min}}\log \left( \frac{2d}{\delta} \right)}.
        \end{aligned}
    $$

    Let
    $
        u_2\left( \delta \right) =\sqrt{d\left(  \lambda _{\max}\left( \varSigma _{Q} \right)-\frac{2}{175\lambda _{\min}} \right) \log \left( \frac{2d}{\delta} \right)},u_3\left( \delta \right) =\sqrt{d\frac{2}{175\lambda _{\min}}\log \left( \frac{2d}{\delta} \right)}
    $.
    Now we sum up our bounds on $R_{s,1}+R_{s,2}$ over  all tasks
    $s\in [m]$ and get
$$
\begin{aligned}
	\sum_{s=1}^m{R_{s,1}+R_{s,2}}&\leqslant 2mv\left( \sqrt{\frac{1}{\lambda _{\min}}}+\sqrt{\frac{n-1}{\vartheta}} \right) \sqrt{2\log \left( n \right)}+mkv\sqrt{\frac{2}{\pi \lambda _{\min}}}\\
	&\quad +2k\left( 4\log \left( m \right) u_2\left( \delta \right) +mu_3\left( \delta \right) \right) \left( u_1\left( \delta \right) +v\sqrt{\frac{2\log \left( n \right)}{\lambda _{\min}}} \right) \left( \sqrt{\frac{1}{\lambda _{\min}}}+\sqrt{\frac{n-1}{\vartheta}} \right) \sqrt{\frac{2\lambda _{\max}}{\pi}}\\
	&\quad +4mkv\left( \sqrt{\frac{1}{2\pi \lambda _{\min}}}+u_1\left( \delta \right) \right) .\\
\end{aligned}
$$


    This concludes our proof.

\end{proof}

\subsection{Proof of Lemma \ref{theo3}} \label{prooftheo3}
\begin{proof}
    According to the proof of \ref{theo3G}, we obtain
    $$
        \mathbb{P} \left( \left\| \boldsymbol{\mu }_*-\boldsymbol{\mu }_{Q,s} \right\| >\varepsilon \mid H_{1:s-1} \right) \leqslant2d\exp \left[ -\frac{\varepsilon ^2}{dv^2\left( \left(  \lambda _{\max}\left( \varSigma _{Q} \right)-\frac{2}{175\lambda _{\min}} \right) \left( \frac{7}{8} \right) ^{s-1}+\frac{2}{175\lambda _{\min}} \right)} \right].
    $$
    Choose
    $
        \varepsilon_s =v\sqrt{d\left( \left(  \lambda _{\max}\left( \varSigma _{Q} \right)-\frac{2}{175\lambda _{\min}} \right) \left( \frac{7}{8} \right) ^{s-1}+\frac{2}{175\lambda _{\min}} \right) \log \left( \frac{4d}{\delta} \right)}
    $
    and  we get that
    \begin{equation}
        \mathbb{P} \left( \left\| \boldsymbol{\mu }_*-\boldsymbol{\mu }_{Q,s} \right\| >\varepsilon_s \mid H_{1:s-1} \right) \leqslant \frac{\delta}{2}
    \end{equation}
    for any task $s$ and history $H_{1: s-1}$. It follows that
    $$
        \mathbb{P} \left( \bigcup_{s=1}^m{\left\{ \left\| \boldsymbol{\mu }_*-\boldsymbol{\mu }_{Q,s} \right\| >\varepsilon_s \right\}} \right) \leqslant \sum_{s=1}^m{\mathbb{P}}\left( \left\| \boldsymbol{\mu }_*-\boldsymbol{\mu }_{Q,s} \right\| >\varepsilon_s \right) =\sum_{s=1}^m{\mathbb{E}}\left[ \mathbb{P} \left( \left\| \boldsymbol{\mu }_*-\boldsymbol{\mu }_{Q,s} \right\| >\varepsilon_s \mid H_{1:s-1} \right) \right] \leqslant \frac{m\delta}{2}.
    $$

    Since $\hat{\boldsymbol{\mu}}_s|H_{1:s}$ is distributed identically to $\boldsymbol{\mu }_*|H_{1:s}$, we have from the same line of reasoning that
    $$
        \mathbb{P} \left( \bigcup_{s=1}^m{\left\{ \left\| \hat{\boldsymbol{\mu}}_s-\boldsymbol{\mu }_{Q,s} \right\| >\varepsilon_s \right\}} \right) \leqslant \frac{m\delta}{2}.
    $$

    Finally, we apply the triangle inequality and union bound,
    $$
        \mathbb{P} \left( \bigcup_{s=1}^m{\left\{ \left\| \hat{\boldsymbol{\mu}}_s-\boldsymbol{\mu }_* \right\| >2\varepsilon_s \right\}} \right) \leqslant \mathbb{P} \left( \bigcup_{s=1}^m{\left\{ \left\| \hat{\boldsymbol{\mu}}_s-\boldsymbol{\mu }_{Q,s} \right\| >\varepsilon_s \right\}} \right) +\mathbb{P} \left( \bigcup_{s=1}^m{\left\{ \left\| \boldsymbol{\mu }_*-\boldsymbol{\mu }_{Q,s} \right\| >\varepsilon_s \right\}} \right) \leqslant m\delta .
    $$
\end{proof}

\end{document}